\newtheorem{theorem}{Theorem}
\newtheorem{lemma}{Lemma}
\theoremstyle{definition}
\newtheorem{definition}{Definition}
\crefname{section}{Sec.}{Secs.}
\Crefname{section}{Section}{Sections}
\Crefname{table}{Table}{Tables}
\crefname{table}{Tab.}{Tabs.}
\def\vx{{\bm{x}}}
\newcommand{\vdelta}{\bm{\delta}}
\DeclareMathOperator*{\argmax}{arg\,max}
\begin{document}

\title{A Practical Upper Bound for the Worst-Case Attribution Deviations}

\author{Fan Wang\quad\quad Adams Wai-Kin Kong \vspace{0.2cm} \\
Nanyang Technological University, Singapore\\
{\tt\small fan005@e.ntu.edu.sg\quad\quad adamskong@ntu.edu.sg}
}
\maketitle

\begin{abstract}
    Model attribution is a critical component of deep neural networks (DNNs) for its interpretability to complex models. Recent studies bring up attention to the security of attribution methods as they are vulnerable to attribution attacks that generate similar images with dramatically different attributions. Existing works have been investigating empirically improving the robustness of DNNs against those attacks; however, none of them explicitly quantifies the actual deviations of attributions. In this work, for the first time, a constrained optimization problem is formulated to derive an upper bound that measures the largest dissimilarity of attributions after the samples are perturbed by any noises within a certain region while the classification results remain the same. Based on the formulation, different practical approaches are introduced to bound the attributions above using Euclidean distance and cosine similarity under both $\ell_2$ and $\ell_\infty$-norm perturbations constraints. The bounds developed by our theoretical study are validated on various datasets and two different types of attacks (PGD attack and IFIA attribution attack). Over 10 million attacks in the experiments indicate that the proposed upper bounds effectively quantify the robustness of models based on the worst-case attribution dissimilarities. 
\end{abstract}

\section{Introduction}\label{sec:introduction}
Attribution methods play an important role in deep learning applications as one of the subareas of explainable AI. Practitioners use attribution methods to measure the relative importance among different features and to understand the impacts of features contributing to the model outputs. They have been widely used in a number of critical real-world applications, such as risk management \cite{bhatt2020explainable}, medical imaging \cite{sayres2019using,singh2020explainable} and drug discovery \cite{jimenez2020drug}. In particular, attributions are supposed to be secure and resistant to external manipulation such that proper explanations can be applied to safety-sensitive applications. Regulations are also deployed in countries to enforce the interpretability of deep learning models for a `right to explain' \cite{goodman2017european}. Although attribution methods have been extensively studied \cite{simonyan2013deep,zeiler2014visualizing,lundberg2017unified,shrikumar2017learning,sundararajan2017axiomatic,zintgraf2017visualizing}, recent works reveal that they are vulnerable to visually imperceptible perturbations that drastically alter the attributions and keep the model outputs unchanged \cite{ghorbani2019interpretation,dombrowski2019explanations}.

Prior works \cite{chen2019robust,boopathy2020proper,ivankay2020far,singh2020attributional,wang2020smoothed,sarkar2021enhanced,wang2022exploiting} investigate the attribution robustness based on empirical and statistical estimations over entire dataset. However, current attribution robustness works are unable to evaluate how robust the model is given any arbitrary test point, perturbed or unperturbed. In this paper, we study the problem of finding the worst attribution perturbation within certain predefined regions. Specifically, given a trained model and an image sample, we propose theoretical upper bounds of the attribution deviations from the unperturbed ones. As far as we know, this is the first attempt to provide an upper bound of attribution differences.

In this paper, the general upper bound for attribution deviation is first quantified as the maximum changes of attributions after the samples are perturbed while classification results remain the same. Two cases are analyzed, including with and without label constraint, which refers to the classification labels being unchanged and changed, respectively, after the original samples are attacked. For each case, two mostly used perturbation constraints, $\ell_2$ and $\ell_\infty$-norm, are considered to compute the upper bound. For $\ell_2$-norm constraint, our approach is based on the first-order Taylor series of model attribution, and a tight upper bound ignoring the label constraint is computed from the singular value of the attribution gradient. $\ell_\infty$-norm constraint is more complicated because the upper bound is a solution of a concave quadratic programming with box constraints, which is an NP-hard problem. Thus, two relaxation approaches are proposed. Moreover, a more restricted bound constrained on the unchanged label is also studied. In this study, Euclidean distance and cosine distance, which are also employed in the previous empirical studies \cite{chen2019robust,singh2020attributional,wang2022exploiting}, are used as dissimilarity functions to measure attribution difference. We summarize the contributions of this paper as follows:
\begin{itemize}
  \item We formally define the general upper bound for attribution deviation as the optimization problem with norm constraint and label constraint to find the maximum change of attributions after samples being attacked. According to the best knowledge of the authors, it has not been studied before.
  \item The tight upper bounds for $\ell_2$-norm constrained attacks with and without classification label constraints are proposed based on the first-order Taylor series. The proposed bound without label constraints generalizes to all gradient-based attribution methods, and the one with label constraints is applicable to all attribution methods satisfying the axiom of completeness.
  \item Two different approaches are provided to bound the $\ell_\infty$-norm constrained attacks above, which uses an $\ell_p$-norm relaxation and a mathematical property of the quadratic form. 
  \item The experimental results show that the upper bounds derived in this paper can effectively bound the attribution differences between all 10 million attacked samples and their corresponding original samples from different models, datasets, attack methods and parameters choices. 
\end{itemize}

The rest of this paper is organized as follows. We start with an introduction to notations and related works. The formulation of the general upper bound for attribution deviation is defined in Sec.~\ref{sec:formulation}. Specific methods to find the upper bounds in different scenarios are provided in Sec.~\ref{sec:upper_bound}. In Sec.~\ref{sec:exp}, detailed experimental results are presented and the paper concludes in Sec.~\ref{sec:conclusion}.

\section{Preliminaries and related works}\label{sec:related}
We consider a twice-differentiable classifier $f$ that maps the input set $\mathcal{D}=\left\{(\vx^{(i)}, y^{(i)})\right\}_{i=1}^n$ to the logits, $f:\mathbb{R}^d\rightarrow\mathbb{R}^k$, where $\vx^{(i)}\in\mathbb{R}^d$ and $y^{(i)}\in\left\{1, \ldots,k\right\}$ represent the $i$-th sample and its ground truth label. The non-bold version $x_k$ represents the $k$-th feature of $\vx$ and $f_y$ is the logit at label $y$. The model attribution of the input sample given label $y$ is computed by $g^y:\mathbb{R}^d\rightarrow\mathbb{R}^d$, and we denote the attribution of $\vx$ by $g^{y}(\vx)$.

\subsection{Model attribution}
The model attribution studies the importance of each input feature $x_i$ that contributes to the final output $f_y(\vx)$. We can classify the most used attribution methods into two categories, perturbations-based methods \cite{zeiler2014visualizing,zintgraf2017visualizing} and backpropagation-based methods \cite{shrikumar2017learning,bach2015pixel}, which include gradient-based methods. In particular, in this paper, we focus on the most commonly used gradient-based attribution methods, saliency map (SM), gradient*input and integrated gradient (IG). Saliency map \cite{simonyan2013deep} is defined as the gradients of output with respect to the input. Gradient*input \cite{shrikumar2016not} is computed by element-wise multiplication of input features and the gradients. Integrated gradients \cite{sundararajan2017axiomatic} is defined as line integral of gradients from a baseline image $\bm{a}$ to the input image $\vx$ weighted by their difference\footnote{The baseline is chosen to be a black image ($\bm{a}=\bm{0}$) in this paper if not specifically stated. Without loss of generality, $f_y(\bm{a})=0$.}. It is worth noting that IG satisfies the axiom of completeness, $\sum_i g_i^y(\vx) = f_y(\vx)$, which builds a direct connection between the attributions and model outputs. The mathematical expressions and examples of the attribution methods are given in Table~\ref{tbl:attr_examples}.

\begin{table*}
  \centering
  \caption{Mathematical expressions and visual examples of the selected attribution methods. Attributions have been taken absolute values and are presented heatmaps to reflect relative importance among pixels. The baseline $\bm{a}$ of IG is chosen as a black image. $\otimes$ denotes the element-wise multiplication.}\label{tbl:attr_examples}
  \resizebox{\textwidth}{!}{%
  \begin{tabular}{cccc}
    \toprule
    Original image &Saliency map &Input*gradient &Integrated gradients\\
    &$\nabla f_y(\vx)$&$\vx \otimes \nabla f_y(\vx)$&$(\vx - \bm{a})\otimes\int_0^1\nabla f_y(\bm{a} + \alpha(\vx - \bm{a}))\,d\alpha$\\
    \midrule
    \includegraphics[width=.3\textwidth]{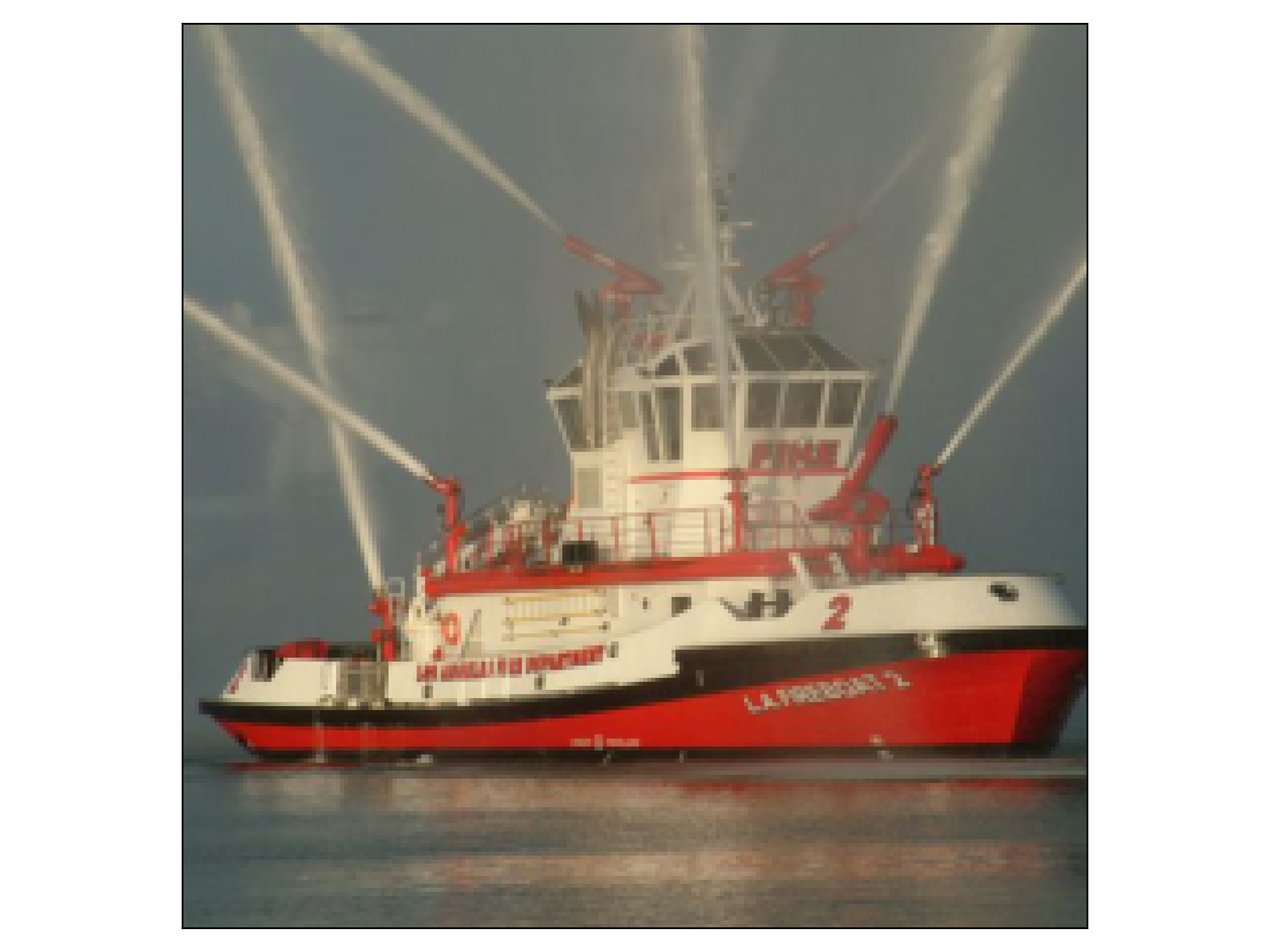}&\includegraphics[width=.3\textwidth]{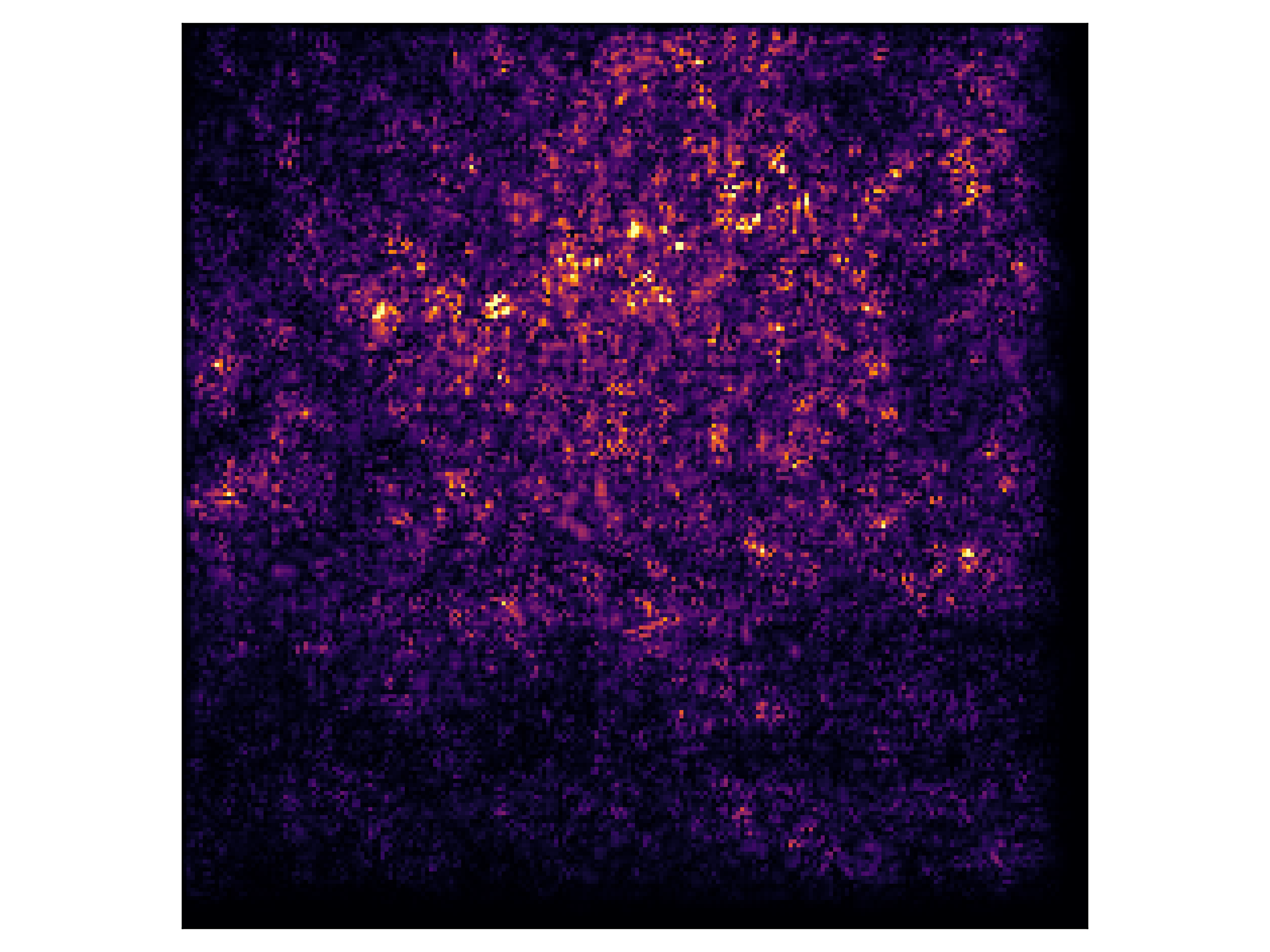}&\includegraphics[width=.3\textwidth]{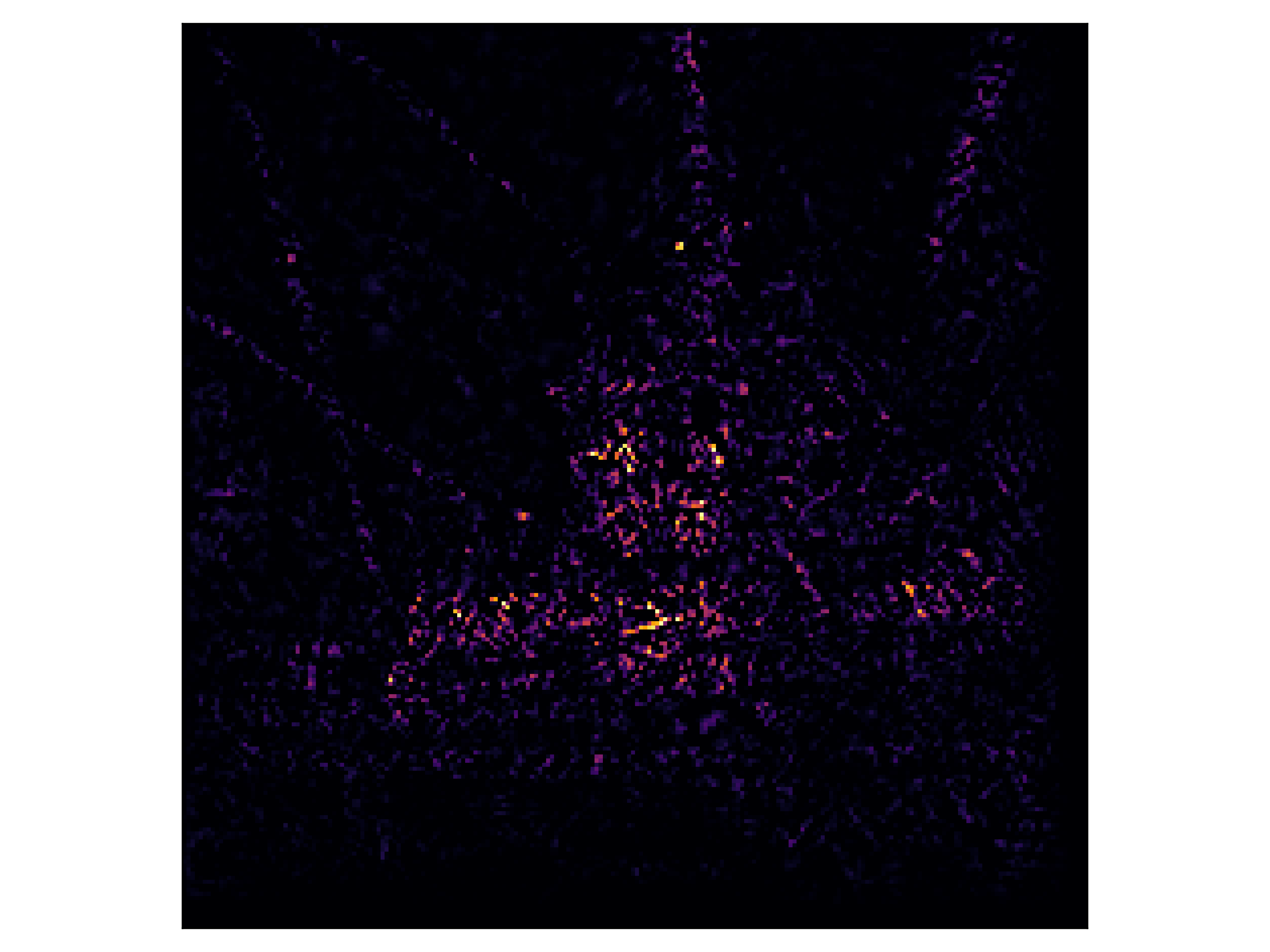}&\includegraphics[width=.3\textwidth]{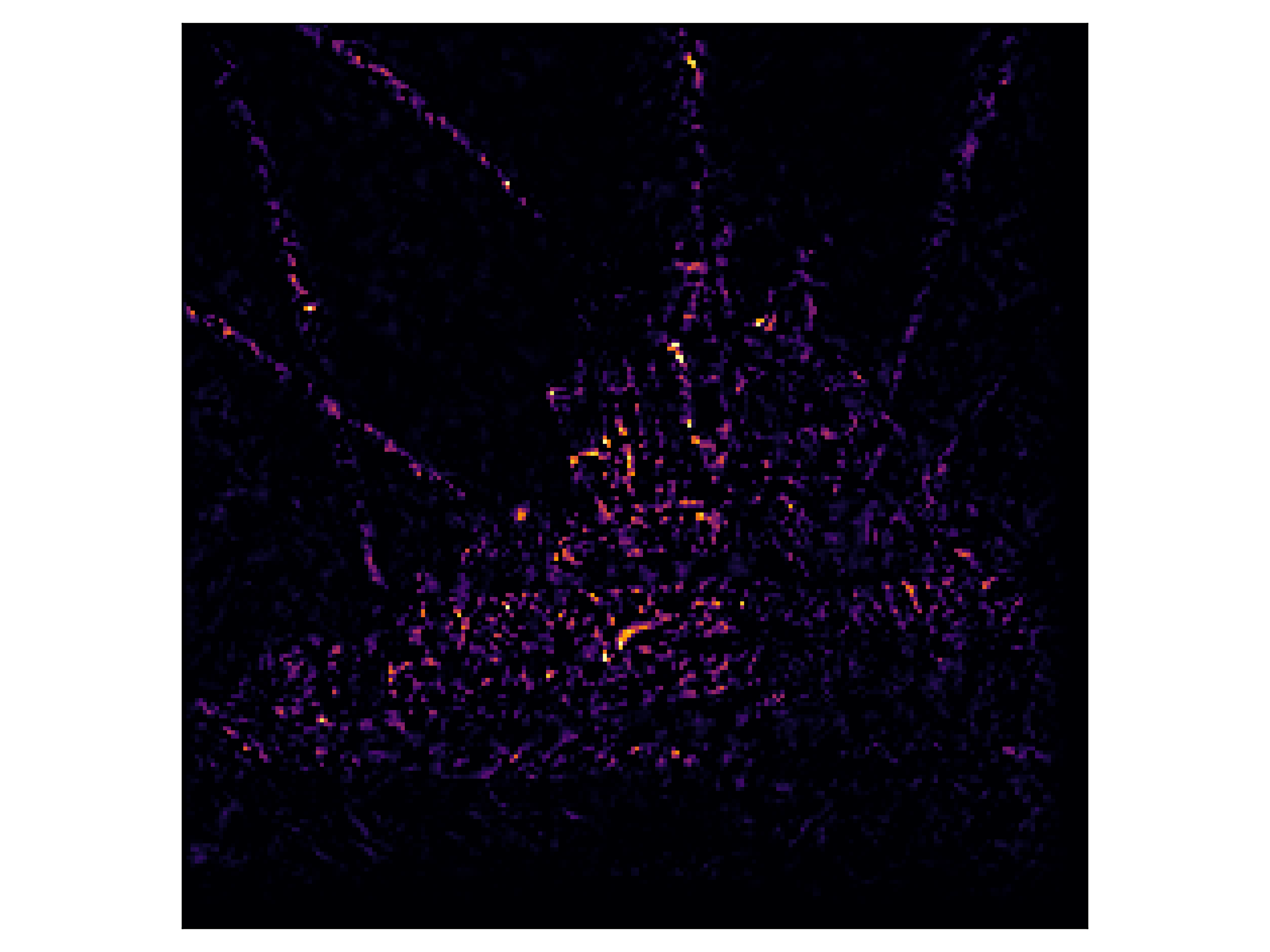}\\
    \bottomrule
  \end{tabular}
  }
\end{table*}

\subsection{Attribution robustness}
It has been discovered in the literature that model attributions can be easily sabotaged by adversaries. Similar to adversarial examples \cite{goodfellow2014explaining}, human-indistinguishable perturbations can be also augmented to natural images that, though classification results remain unchanged, misdirect the model attributions towards meaningless interpretations \cite{ghorbani2019interpretation} or any predefined arbitrary patterns which are unrelated to the original images \cite{dombrowski2019explanations}. 

To mitigate the threat of being attacked, researchers have also worked on training attribution robust models. The most considered techniques are adapted from adversarial training \cite{madry2018towards}, and they minimize the differences between original and the worst-case perturbed attributions. 
Boopathy \etal \cite{boopathy2020proper} and Chen \etal \cite{chen2019robust} consider the $\ell_1$-norm distance to measure the difference between attributions, and Ivankay \etal \cite{ivankay2020far} uses Pearson correlation coefficient. Singh \etal \cite{singh2020attributional} and Wang \etal \cite{wang2020smoothed} choose $\ell_2$-norm distance, where the former minimizes the spatial correlation between image and attribution using a soft-margin triplet loss, and the latter shows the smoothness of the decision surface is related to attribution robustness based on a geometric understanding. Wang \& Kong \cite{wang2022exploiting} emphasizes the directions of attributions using the relationship between Kendall's rank correlation and cosine similarity and protects the attribution based on the latter. However, none of the aforementioned methods defines a quantitative measurement to the attribution changes after perturbation. More clearly, the attributions are not guaranteed to be protected for all perturbations within the allowable region that do not alter the classification outputs.

\section{Formulation of general upper bound for attribution deviations}\label{sec:formulation}
In this section, we formally define the general upper bound for attribution deviation as a measurement of attribution robustness. Recall that adversaries incapacitate the attributions of neural networks by adding imperceptible noises to natural images. For an attributional robust model, on the contrary, the imperceptible noises should not change the interpretability of attributions, \emph{i.e.}, images perturbed by noises should provide \emph{similar} attributions as the original ones. To evaluate such resistance against adversaries, it is essential to find an upper bound that represents the worst-case dissimilarity of attributions after the original images being perturbed. Thus, we define the general upper bound for attribution deviations as follows.

\begin{definition}[General upper bound for attribution deviations]\label{def:cert_ar}
  Given a trained neural network $f$, a fixed allowable region for perturbation $\vdelta$, $\mathcal{B}_{\varepsilon} = \left\{\vdelta: \Vert\vdelta\Vert_p\leq\varepsilon\right\}$, and an input sample $\vx$, the \emph{general upper bound for attribution deviations} $T(\varepsilon;\vx)$ is defined that, for all perturbations $\vdelta\in\mathcal{B}_\varepsilon$, if $\argmax_k f_k(\vx) = \argmax_k f_k(\vx+\vdelta)$, then their corresponding attributions satisfy that $D(g^y(\vx), g^y(\vx+\vdelta))\leq T(\varepsilon;\vx)$.
\end{definition}

In the above definition, $D(\cdot, \cdot)$ is a dissimilarity metric that measures the difference between two attributions, where a smaller value indicates that two attributions are more similar and represent closer meaningful interpretations. $T$ is a function with respect to the threshold $\varepsilon$ and $\vx$. The definition formalizes the guarantee of attribution robustness, where when the model is more robust, the model attributions being attacked are less likely to be misled. More precisely, when being attacked, the change of attribution is bounded above and the smaller upper bound indicates the more attributional robust model.

Based on the above definition, the general upper bound for attribution deviations $T(\varepsilon;\vx)$ can be found by solving the following optimization problem 
\begin{equation}\label{eq:opt_cert_ar}
  \begin{aligned}
    \max_{\vdelta} \qquad & D(g^y(\vx), g^y(\vx+\vdelta))\\
    \text{s.t.} \qquad &\Vert\vdelta\Vert_p\leq\varepsilon\\
    & \argmax_k f_k(\vx) = \argmax_k f_k(\vx+\vdelta)
  \end{aligned}
\end{equation}

We refer the first constraint $\Vert\vdelta\Vert_p\leq\varepsilon$ to the norm constraint and the second one to the label constraint as it requires the unchanged label after being perturbed. In following sections, we attempt to solve the optimization problem (\ref{eq:opt_cert_ar}) using the two mostly used norm constraints on the perturbations, $\ell_2$ and $\ell_\infty$, \emph{i.e.}, $\Vert\vdelta\Vert_2\leq\varepsilon$ and $\Vert\vdelta\Vert_\infty\leq\varepsilon$. For the dissimilarity metric, we choose from previously used attribution measurements, the Euclidean distance and the cosine distance. An alternative formulation of this problem is to find the maximum perturbation $\varepsilon$ subject to $D(g^y(\vx), g^y(\vx+\vdelta))\leq \omega$ where $\omega$ is a predefined threshold. The results obtained from problem (\ref{eq:opt_cert_ar}) can be converted directly to the alternative formulation, and we defer the procedures to Appendix~\ref{append:alt_form}.

\section{A practical upper bound for attribution deviations}\label{sec:upper_bound}
\subsection{Upper bound under \texorpdfstring{$\ell_2$}{L2}-norm constraint without the label constraint}\label{sec:l2_nonlabel}
To start with, the upper bound without label constraints is studied. This will provide a looser bound since, intuitively, stronger adversaries are allowed to perturb the samples that may change the classification results. While perturbations are restricted in a small region and indistinguishable to human, they could be enough to cause huge difference in attributions. The upper bound for $\ell_2$-norm constrained case is a straightforward derivation of the first-order Taylor series of attribution functions. The following theorem provides a tight bound for attribution robustness assuming that the attribution function is locally linear.
\begin{restatable}{theorem}{thmltwounlabel}\label{thm:l2_unlabel}
  Given a twice-differentiable classifier $f: \mathbb{R}^d\rightarrow\mathbb{R}^k$, and its attribution $g^y$ on label $y$, assume that $g^y$ is locally linear within the neighborhood of $\vx$, $\mathcal{B}_\varepsilon(\vx)=\left\{\vx+\vdelta\vert\Vert\vdelta\Vert_2\leq\varepsilon\right\}$, then for all perturbations $\Vert\vdelta\Vert_2\leq\varepsilon$, 
  $$\Vert g^y(\vx+\vdelta)-g^y(\vx)\Vert_2\leq \xi_{max}\varepsilon,$$ 
  where $\xi_{max}$ is the largest singular value of $H=\nabla g^y(\vx)$.
\end{restatable}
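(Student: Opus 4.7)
The plan is to exploit the local-linearity assumption to replace $g^y(\vx+\vdelta)-g^y(\vx)$ by an exact linear expression, and then bound the resulting matrix-vector product using the definition of the spectral norm.

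First I would note that, because $g^y$ is assumed to be locally linear on $\mathcal{B}_\varepsilon(\vx)$, the first-order Taylor expansion of $g^y$ around $\vx$ is exact on this ball; concretely, for every $\vdelta$ with $\Vert\vdelta\Vert_2\leq\varepsilon$ one has
\begin{equation*}
  g^y(\vx+\vdelta) - g^y(\vx) \;=\; H\vdelta,
\end{equation*}
where $H=\nabla g^y(\vx)\in\mathbb{R}^{d\times d}$ is the Jacobian. Taking the $\ell_2$-norm of both sides reduces the theorem to the purely linear-algebraic statement $\Vert H\vdelta\Vert_2\leq\xi_{max}\varepsilon$.

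Next I would invoke the singular value decomposition $H=U\Sigma V^\top$, where $U,V$ are orthogonal and $\Sigma=\mathrm{diag}(\xi_1,\dots,\xi_d)$ with $\xi_{max}=\xi_1\geq\xi_2\geq\dots\geq 0$. Orthogonality of $U$ gives $\Vert H\vdelta\Vert_2 = \Vert\Sigma V^\top\vdelta\Vert_2$, and writing $\vz=V^\top\vdelta$ yields
\begin{equation*}
  \Vert\Sigma\vz\Vert_2^2 \;=\; \sum_{i=1}^{d}\xi_i^2 z_i^2 \;\leq\; \xi_{max}^2\sum_{i=1}^{d} z_i^2 \;=\; \xi_{max}^2\Vert\vz\Vert_2^2.
\end{equation*}
Orthogonality of $V$ then gives $\Vert\vz\Vert_2=\Vert\vdelta\Vert_2\leq\varepsilon$, so $\Vert H\vdelta\Vert_2\leq\xi_{max}\Vert\vdelta\Vert_2\leq\xi_{max}\varepsilon$, completing the bound. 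Equivalently one can just cite the characterization $\xi_{max}=\Vert H\Vert_2=\sup_{\Vert\vu\Vert_2=1}\Vert H\vu\Vert_2$.

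There isn't really a hard step: the local linearity hypothesis does all the heavy lifting by killing the Taylor remainder, and the remainder of the argument is the standard spectral-norm inequality. The only thing worth remarking on in the write-up is tightness, which follows by choosing $\vdelta=\varepsilon v_1$, where $v_1$ is the right singular vector corresponding to $\xi_{max}$; this attains $\Vert H\vdelta\Vert_2=\xi_{max}\varepsilon$ and justifies the claim in the theorem statement that the bound is tight under the local-linearity assumption.
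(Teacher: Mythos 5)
Your proof is correct and follows essentially the same route as the paper's: local linearity reduces the problem to bounding $\Vert H\vdelta\Vert_2$, which you handle via the SVD while the paper bounds the Rayleigh quotient of $P=HH^\top$ by its largest eigenvalue $\lambda_{max}=\xi_{max}^2$ --- the same spectral-norm fact in two standard guises. Your tightness remark ($\vdelta=\varepsilon v_1$) likewise matches the paper's observation that equality holds at $\vdelta=\pm\varepsilon\bm{v}_{max}$.
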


\begin{proof}
  Based on the Taylor series of $g^y(\vx)$ and the above condition, we have
  \begin{align}
    \Vert g^y(\vx+\vdelta) - g^y(\vx) \Vert_2^2 &\leq \Vert \vdelta^\top\nabla g^y(\vx)\Vert_2^2\\
    &= \vdelta^\top\nabla g^y(\vx)\nabla g^y(\vx)^\top\vdelta\\
    &= \frac{\vdelta^\top}{\Vert\vdelta\Vert_2}P \frac{\vdelta}{\Vert\vdelta\Vert_2} \cdot \Vert\vdelta\Vert^2_2 \\
    &\leq \lambda_{max}\Vert\vdelta\Vert^2_2\leq \lambda_{max}\varepsilon^2 \label{eqn:eigen_ineq}
  \end{align}
  where $\lambda_{max}$ is the largest eigenvalue of $P = HH^\top = \nabla g^y(\vx)\nabla g^y(\vx)^\top$, and $\bm{v}_{max}$ is the corresponding eigenvector. The equality in Eq.~(\ref{eqn:eigen_ineq}) is achieved when $\vdelta$ is $\varepsilon\bm{v}_{max}$ or $-\varepsilon\bm{v}_{max}$. Since the singular values of $H$ are equal to the square root of the eigenvalues of $P$, then,
  \begin{equation}
    \Vert g^y(\vx+\vdelta) - g^y(\vx) \Vert_2 \leq \sqrt{\lambda_{max}}\varepsilon = \xi_{max}\varepsilon. \label{eqn:ub_l2}
  \end{equation}
\end{proof}

Note that the local linearity of attribution function is a weak assumption for both attribution and adversarial robust models since most of the defense methods \cite{qin2019adversarial,wang2020smoothed} attempt to smoothen the functions. In addition, when the magnitude of perturbation $\vdelta$ is constrained to small size, the magnitude of the higher-order Taylor remainders is negligible. We include the empirical results evaluating this assumption in Appendix~\ref{append:eval_locallinear}. Furthermore, we also provide a generalization of the theorem that bounds the attribution differences as a function of a constant $c\geq 1$ that measures the error margin of the first-order Taylor series in Appendix~\ref{append:general_thm}, which can be applied similarly to all other results in this work.

It is also noticed that the above theorem uses the gradient of attribution $H=\nabla g^y(\vx)$, which is also the Hessian matrix $\nabla^2 f_y(\vx)$ when the attribution is chosen as saliency maps and can be computed easily for other gradient-based attribution methods. Moreover, the second-order derivatives can be zeros for ReLU networks \cite{dombrowski2019explanations}. In this work, the non-linearity functions are replaced by softplus function $f(\vx;\beta) = \frac{1}{\beta}\log (1+e^{\beta\vx})$ as in Dombrowski \etal \cite{dombrowski2019explanations}. A 2D example of the upper bound is illustrated in Fig.~\ref{fig:illustrate}. The optimum solution is in the same direction as the semi-major axis of the ellipse, which represents $\vdelta^\top P\vdelta$. The circle represents the 2D Euclidean ball bounded by $T(\varepsilon;\vx)$, which is derived from the length of the semi-major axis. 

\begin{figure*}
  \centering
  \begin{subfigure}{.45\textwidth}
      \centering
      \includegraphics[width=\textwidth]{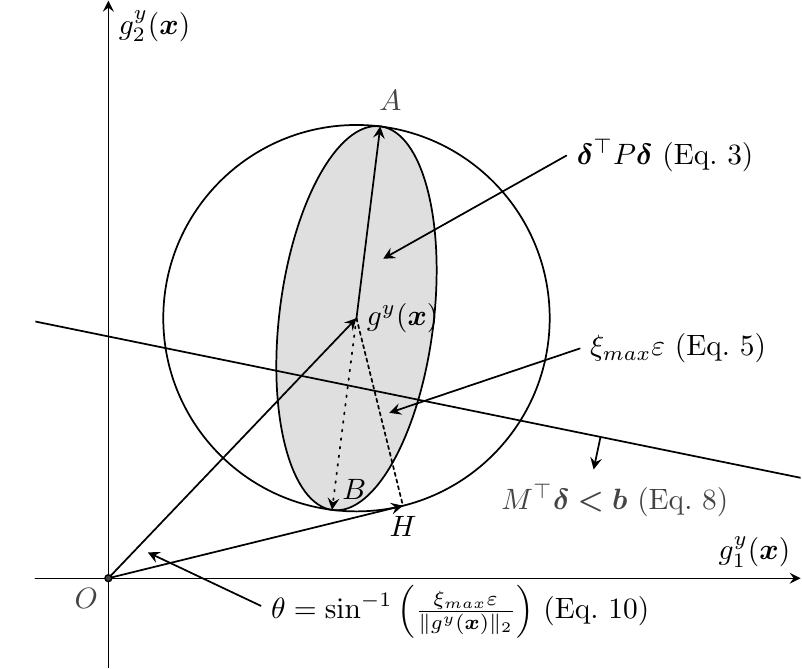}
      \caption{}\label{fig:illustrate}
  \end{subfigure}
  \hfill
  \begin{subfigure}{.54\textwidth}
    \centering
    \includegraphics[width=.75\textwidth]{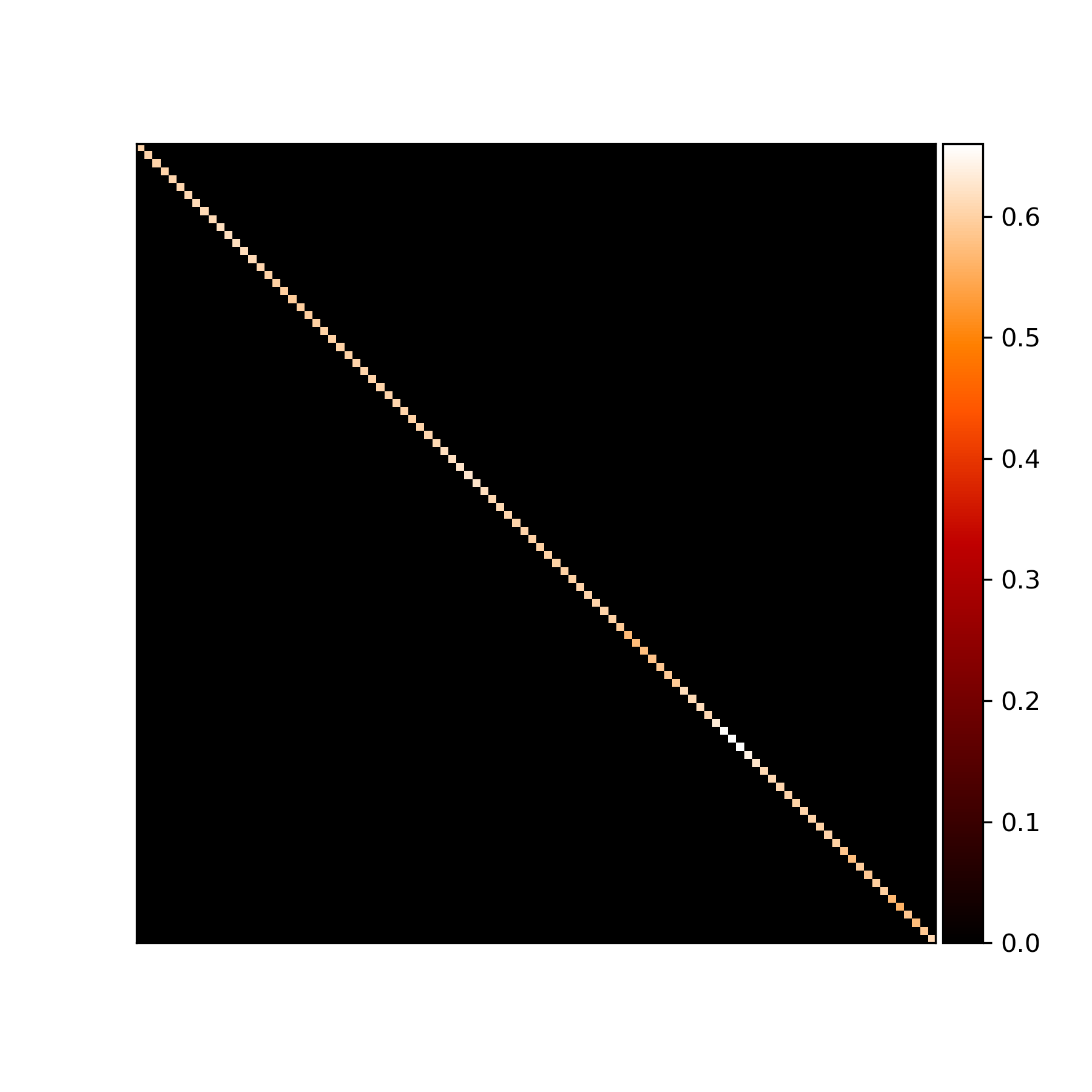}
    \caption{}\label{fig:hess_diag}
  \end{subfigure}
  \caption{(a) 2D illustration of the proposed upper bound on Euclidean distance and cosine distance. (b) Visualization of the absolute values of gradient IG as a heat map. The gradient is generated using CIFAR-10 ($3072\times 3072$), and the values are normalized to $[0, 1]$. Here the first 100 dimensions of each axis are plotted for better visualization. More figures and the mathematical analysis are given in Appendix~\ref{append:analyse_attr_grad}.}
\end{figure*}

\subsection{Upper bound under \texorpdfstring{$\ell_\infty$}{Linf}-norm constraint without the label constraint}\label{sec:linf_unlabel}
The upper bound for $\ell_\infty$-norm constrained case is more complicated as $\Vert\vdelta\Vert_\infty\leq\varepsilon$ defines a box constraints inequality system that $-\varepsilon\leq\delta_i\leq\varepsilon$ for all $i$. If we still consider the quadratic form derived from the first-order Taylor series as in Sec.~\ref{sec:l2_nonlabel}, the above optimization problem (\ref{eq:opt_cert_ar}) turns into a concave quadratic programming with box constraints, which is NP-hard \cite{pardalos1991quadratic}. In order to compute the upper bound efficiently, we consider a loose relaxation of $p$-norms.

\begin{restatable}{corollary}{linfunlabel}\label{cor:linf_unlabel}
  Given a twice-differentiable classifier $f:\mathbb{R}^d\rightarrow\mathbb{R}^k$, and its attribution $g^y$ on label $y$, assume that $g^y$ is locally linear within the neighborhood of $\vx$, $\mathcal{B}_\varepsilon(\vx)=\left\{\vx+\vdelta\vert\Vert\vdelta\Vert_p\leq\varepsilon\right\}$, then for all perturbations $\Vert\vdelta\Vert_p\leq\varepsilon$ that $p>2$, $\Vert g^y(\vx+\vdelta) - g^y(\vx) \Vert_2\leq d^{\frac{1}{2}-\frac{1}{p}}\xi_{max}\varepsilon$, where $\xi_{max}$ is the largest singular value of $H=\nabla g^y(\vx)$. %
\end{restatable}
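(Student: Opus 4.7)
The plan is to reduce the $\ell_p$-norm case ($p > 2$) to the $\ell_2$-norm case already handled by Theorem~\ref{thm:l2_unlabel} via a standard norm-equivalence inequality. The key observation is that the $\ell_p$-ball of radius $\varepsilon$ in $\mathbb{R}^d$ is contained in an $\ell_2$-ball of radius $d^{1/2 - 1/p}\varepsilon$, so any perturbation feasible under the new norm constraint is also feasible (after rescaling the radius) under the $\ell_2$ constraint to which Theorem~\ref{thm:l2_unlabel} applies.

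First, I would establish the pointwise norm comparison $\Vert\vdelta\Vert_2 \leq d^{1/2 - 1/p}\Vert\vdelta\Vert_p$ for every $\vdelta \in \mathbb{R}^d$ and every $p > 2$. This follows from the power-mean inequality applied to the nonnegative values $|\delta_i|$: for $p > 2$,
\begin{equation*}
\left(\tfrac{1}{d}\sum_{i=1}^d |\delta_i|^2\right)^{1/2} \leq \left(\tfrac{1}{d}\sum_{i=1}^d |\delta_i|^p\right)^{1/p},
\end{equation*}
and rearranging the factors of $d$ gives the desired inequality. Equivalently, one can obtain the same bound from H\"older's inequality with conjugate exponents $p/2$ and $p/(p-2)$ applied to $\sum |\delta_i|^2 \cdot 1$.

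Second, I would combine this with Theorem~\ref{thm:l2_unlabel}. Under the local-linearity assumption on $g^y$ within $\mathcal{B}_\varepsilon(\vx)$, any $\vdelta$ with $\Vert\vdelta\Vert_p \leq \varepsilon$ also satisfies $\Vert\vdelta\Vert_2 \leq d^{1/2 - 1/p}\varepsilon$, so applying Theorem~\ref{thm:l2_unlabel} with effective radius $\varepsilon' = d^{1/2 - 1/p}\varepsilon$ yields
\begin{equation*}
\Vert g^y(\vx+\vdelta) - g^y(\vx)\Vert_2 \leq \xi_{max}\,\varepsilon' = d^{1/2 - 1/p}\,\xi_{max}\,\varepsilon,
\end{equation*}
which is the claimed inequality. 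Specializing to the $\ell_\infty$ case ($p \to \infty$) gives the factor $d^{1/2}$.

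There is no real obstacle here beyond choosing the right norm inequality; the proof is essentially a one-line corollary once Theorem~\ref{thm:l2_unlabel} is in hand. The genuine conceptual caveat, rather than a proof difficulty, is that the factor $d^{1/2 - 1/p}$ grows quickly with the input dimension (e.g., $\sqrt{d}$ for $\ell_\infty$ on high-resolution images), so while the bound is correct it is typically quite loose; this looseness is precisely the motivation for the alternative, quadratic-form-based bound developed in the next subsection of the paper.
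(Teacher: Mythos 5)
Your proof is correct and follows essentially the same route as the paper: the paper proves the norm-comparison $\Vert\vdelta\Vert_2\leq d^{\frac{1}{2}-\frac{1}{p}}\Vert\vdelta\Vert_p$ as a lemma via H\"older's inequality with exponents $p/2$ and $p/(p-2)$ (exactly your second derivation) and then reuses the chain of inequalities from Theorem~\ref{thm:l2_unlabel}. Your observations about the looseness of the $d^{\frac{1}{2}-\frac{1}{p}}$ factor match the paper's own discussion.
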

The proof of the relaxation of $p$-norm and Corollary~\ref{cor:linf_unlabel} can be found in Appendix~\ref{proof:cor_linf_unlabel}. Note that this corollary not only avoids the NP-hard problem for $\ell_\infty$-norm constraint, but it is also a general upper bound for $p$-norm constraint on $\vdelta$ for all $p>2$. However, it is also noticed that the upper bound increases with respect to the input sample dimension. The multiplication factor for $\ell_\infty$ is $\sqrt{d}$. For high-dimensional input samples, the provided method would scale up to an extremely loose upper bound that can be trivial but meaningless. To better bound the attribution deviations in the $\ell_\infty$-norm case, we provide a tighter upper bound using the sparsity of attribution gradients. 

\begin{restatable}{theorem}{thmlinfunlabel}\label{thm:linf_unlabel}
  Given a twice-differentiable classifier $f$, its attribution on label $y$, $g^y$, and the gradient $H = \nabla g^y$, assume that $g^y$ is locally linear within the neighborhood of $\vx$, $\mathcal{B}_\varepsilon(\vx)=\left\{\vx+\vdelta\vert\Vert\vdelta\Vert_\infty\leq\varepsilon\right\}$, then for all perturbations $\Vert\vdelta\Vert_\infty\leq\varepsilon$,
  \begin{equation}\label{eqn:linf_sum_ub}
    \Vert g^y(\vx+\vdelta) - g^y(\vx) \Vert_2 \leq \varepsilon\sqrt{\sum_{i,j}\left\vert P_{ij}\right\vert}.
  \end{equation}
  where $P=HH^\top$ and the equality is taken at $\vdelta = (\pm\varepsilon, \ldots, \pm\varepsilon)^\top$.
\end{restatable}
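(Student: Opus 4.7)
The plan is to reuse the Taylor--series step from the proof of Theorem~\ref{thm:l2_unlabel} and then replace the spectral bound by an entry--wise bound adapted to the box constraint $\Vert\vdelta\Vert_\infty \leq \varepsilon$. Local linearity gives, exactly as in the first line of that earlier proof, $\Vert g^y(\vx+\vdelta) - g^y(\vx)\Vert_2^2 \leq \vdelta^\top P \vdelta$ with $P = HH^\top \succeq 0$. The key change is that bounding the quadratic form by $\lambda_{\max}\Vert\vdelta\Vert_2^2$ and then converting $\Vert\vdelta\Vert_2 \leq \sqrt{d}\,\varepsilon$ would reproduce exactly the loose $\sqrt{d}$ multiplicative factor of Corollary~\ref{cor:linf_unlabel}, so I would instead keep the quadratic in coordinate form.

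Writing $\vdelta^\top P \vdelta = \sum_{i,j} P_{ij}\delta_i\delta_j$, the triangle inequality together with the coordinate--wise bound $|\delta_i|\leq\varepsilon$ yields $\sum_{i,j} P_{ij}\delta_i\delta_j \leq \sum_{i,j}|P_{ij}||\delta_i||\delta_j| \leq \varepsilon^2 \sum_{i,j}|P_{ij}|$, and taking square roots gives the claimed bound. This is the entire analytic content of the upper-bound part; the only conceptual move is to exploit the uniform coordinate bound $|\delta_i| \leq \varepsilon$ directly rather than routing through $\Vert\vdelta\Vert_2$.

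For the equality assertion, I would argue that $\vdelta \mapsto \vdelta^\top P \vdelta$ is convex on $\mathbb{R}^d$ since $P \succeq 0$, so its maximum over the polytope $\Vert\vdelta\Vert_\infty \leq \varepsilon$ is attained at some vertex $\vdelta \in \{-\varepsilon, +\varepsilon\}^d$. When a sign pattern exists with $\operatorname{sign}(\delta_i)\operatorname{sign}(\delta_j) = \operatorname{sign}(P_{ij})$ for every pair $(i,j)$, the triangle inequality is simultaneously saturated and the bound is achieved at such a vertex. I expect this equality characterization to be the main subtlety: choosing the optimal vertex is in general a combinatorial MAX-CUT-like problem, so for arbitrary PSD $P$ the right-hand side may overshoot the true maximum of the quadratic. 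The practical appeal, however, is that $\varepsilon\sqrt{\sum_{i,j}|P_{ij}|}$ is computable directly from $P$, and the near--sparsity of the attribution gradient illustrated in Fig.~\ref{fig:hess_diag} is what keeps the bound much tighter than the $\sqrt{d}$ of the $p$-norm relaxation in practice.
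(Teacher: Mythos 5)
Your proof is correct and follows essentially the same route as the paper's: both start from $\Vert g^y(\vx+\vdelta)-g^y(\vx)\Vert_2^2\leq\vdelta^\top P\vdelta=\sum_{i,j}P_{ij}\delta_i\delta_j$ and bound each term entrywise via $|\delta_i|\leq\varepsilon$, with your version $P_{ij}\delta_i\delta_j\leq|P_{ij}||\delta_i||\delta_j|$ actually stated slightly more carefully than the paper's. Your added observation that the stated equality at $\vdelta=(\pm\varepsilon,\ldots,\pm\varepsilon)^\top$ holds only when a sign pattern saturates all terms simultaneously (and that otherwise the bound strictly overshoots) is a valid caveat the paper's proof does not address.
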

The proof is deferred to Appendix~\ref{proof:thm_linf_unlabel}. This upper bound as the summation of absolute values of the matrix $P=\nabla g^y(\vx)\nabla g^y(\vx)^\top$ is shown to be tighter than that given in Corollary~\ref{cor:linf_unlabel} since $P$ is a diagonal-dominated and positive semi-definite matrix (see Fig.~\ref{fig:hess_diag}), which implies that $\left\vert P_{ii}\right\vert\approx\lambda_i$. 

\subsection{Upper bound with the label constraint}\label{sec:cert_labeled}
In this section, we extend our study of the upper bound to the case that labels are not changed after the samples are perturbed. Here, only attribution methods satisfying the axiom of completeness are studied as the axiom provides a direct connection between attributions and model outputs, \emph{i.e.}, $\sum_i g_i^y(\vx) = f_y(\vx)$. The following proposition gives a sufficient condition to ensure that the classification result remains unchanged after the sample is perturbed.
\begin{restatable}{proposition}{propcondlabel}\label{prop:cond_label}
  Denote the gradient-based attribution satisfying the completeness axiom of $\vx$ on ground truth label $y$ by $g^y(\vx)$, and the attribution on a different label $y'$ by $g^{y'}(\vx)$. Given the perturbation $\vdelta$, assume that $g^y$ is locally linear within the neighborhood of $\vx$, $\mathcal{B}_\varepsilon(\vx)=\left\{\vx+\vdelta\vert\Vert\vdelta\Vert_p\leq\varepsilon\right\}$, the classification result of $\vx+\vdelta$ does not change from $y$ to $y'$ if
  \begin{equation}
    \left(\left(\nabla g^{y'}(\vx) - \nabla g^{y}(\vx)\right)\Delta\right)^\top \vdelta < f_{y}(\vx) - f_{y'}(\vx),
  \end{equation}
  where $\Delta$ is an all one vector, $\Delta = \left(1,\ldots, 1\right)^\top\in\mathbb{R}^d$.
\end{restatable}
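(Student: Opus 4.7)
My plan is to translate the label-preservation condition $f_y(\vx+\vdelta) > f_{y'}(\vx+\vdelta)$, which ensures the argmax at the perturbed point is $y$ rather than $y'$, into a condition on the gradient Jacobians of the attributions. The completeness axiom $\sum_i g_i^y(\vx) = f_y(\vx)$ is the bridge: it converts the scalar logit difference into a sum of attribution coordinates, and the first-order Taylor expansion of the attribution then linearizes that sum in $\vdelta$.

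Concretely, under the local linearity assumption (which I would apply to both $g^y$ and $g^{y'}$; the statement only mentions $g^y$ but the same hypothesis is clearly intended for $g^{y'}$ since $\nabla g^{y'}(\vx)$ appears in the conclusion), I write $g^y(\vx+\vdelta) = g^y(\vx) + \nabla g^y(\vx)^\top \vdelta$ in the Jacobian convention established in the proof of Theorem~\ref{thm:l2_unlabel}. Summing over the $d$ coordinates using the identity $\sum_i v_i = \Delta^\top v$ and invoking completeness yields
\begin{equation*}
f_y(\vx+\vdelta) = f_y(\vx) + \big(\nabla g^y(\vx)\Delta\big)^\top \vdelta,
\end{equation*}
and analogously $f_{y'}(\vx+\vdelta) = f_{y'}(\vx) + \big(\nabla g^{y'}(\vx)\Delta\big)^\top \vdelta$. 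Subtracting these two identities and demanding $f_y(\vx+\vdelta) > f_{y'}(\vx+\vdelta)$ produces
\begin{equation*}
\big((\nabla g^{y'}(\vx) - \nabla g^y(\vx))\Delta\big)^\top \vdelta < f_y(\vx) - f_{y'}(\vx),
\end{equation*}
which is the stated sufficient condition.

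Since the argument collapses to one linearization plus one rearrangement, there is no single hard step. The main point requiring care is the direction of the subtraction together with the Jacobian convention, so that the sign pattern on the left-hand side comes out as $\nabla g^{y'} - \nabla g^y$ and not the opposite, and so that the inner factor $\Delta^\top \nabla g^y(\vx)^\top \vdelta$ is legitimately rewritten as $(\nabla g^y(\vx)\Delta)^\top \vdelta$. It is also worth emphasizing that the bound is only a sufficient condition, and that strict inequality is precisely what prevents $y'$ from grabbing the argmax at the perturbed point. Notably, no structural property of the norm constraint on $\vdelta$ is invoked, which is consistent with the proposition being stated for a generic $p$-ball.
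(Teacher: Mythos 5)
Your proof is correct and follows essentially the same route as the paper's: linearize both $g^y$ and $g^{y'}$ via the first-order Taylor expansion, use the completeness axiom $\Delta^\top g^{y}(\vx) = f_y(\vx)$ to convert the logit comparison $f_y(\vx+\vdelta) > f_{y'}(\vx+\vdelta)$ into a comparison of summed attribution coordinates, and rearrange. Your observation that the local-linearity hypothesis must also be applied to $g^{y'}$ is a fair reading of the statement and matches what the paper's own proof implicitly does.
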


The full proof can be found in Appendix~\ref{proof:prop_cond_label}. Note that the inequality is linear to $\vdelta$ and we denote $M = (\nabla g^{y'}(\vx) - \nabla g^{y}(\vx))\Delta$ and $b = f_{y}(\vx) - f_{y'}(\vx)$ for simplicity, \emph{i.e.}, $M^\top \vdelta < b$. 
To bound the attribution differences after the sample is perturbed by noise $\vdelta$ in $\ell_2$-norm ball, \emph{i.e.}, $\Vert\vdelta\Vert_2\leq\varepsilon$, the upper bound can be formulated by rewriting the optimization problem (\ref{eq:opt_cert_ar}) as the optimal value of the following quadratic programing with concave objective function and a system of linear constraints for all labels different from $y$,
\begin{equation}\label{eqn:opt_l2_label}
      \max_{\vdelta}~ \vdelta^\top P\vdelta \quad \text{s.t. }\Vert\vdelta\Vert_2\leq\varepsilon \text{ and } M^\top \vdelta < b.
\end{equation}
To simplify the computation, in this work, we only consider the second best label $y'$, \emph{i.e.}, $y'=\argmax_{k\in\left\{1,\ldots, c\right\}\backslash y}f_k(\vx)$. In such case, the constraint $M^\top\vdelta < b$ defines a half-space. Recall that Theorem~\ref{thm:l2_unlabel} states that the upper bound in $\ell_2$-norm case without label constraint is $\xi_{max}\varepsilon$, and we noticed that this bound is achieved at two opposite vectors $\vdelta^\ast = \varepsilon\bm{v}_{max}$ or $\vdelta^\ast = -\varepsilon\bm{v}_{max}$. Thus, at least one of these two vectors lies in the half-space defined by the linear constraint (see point $A$ and $B$ in Fig.~\ref{fig:illustrate}. Therefore, the upper bound provided in Theorem~\ref{thm:l2_unlabel} is also achieved even if the label constraint is added, \emph{i.e.}, the optimal value of optimization problem (\ref{eqn:opt_l2_label}) is also $\xi_{max}\varepsilon$.

The bound of $\ell_p$-norm constrained case can be derived similarly. The less tight upper bound is still achievable at $d^{\frac{1}{2}-\frac{1}{p}}\xi_{max}\varepsilon$ as in Corollary~\ref{cor:linf_unlabel}. Generalizing the $\ell_\infty$-norm constrained upper bound using Eq.~\ref{eqn:linf_sum_ub} is simpler. According to Theorem~\ref{thm:linf_unlabel}, there are $2^d$ different optimal solutions that achieve the optimum and they are the corners of the $\ell_\infty$-norm box. As long as the feasible region is non-empty, there exists at least one corner of the box lying inside the feasible region, and the optimum value is achieved. %
Similarly, given a $d$-dimensional $\ell_\infty$-norm box, and $k$ label constraints that each separates the entire space into two half-spaces, if at least one corner of the box lies within the feasible region, the optimum value is attainable. 

\subsection{Upper bound based on cosine distance}\label{sec:cert_cos}
In the previous parts of this section, we discussed several practical upper bounds based on Euclidean distance. It is noticed that Wang \& Kong \cite{wang2022exploiting} claims that cosine similarity ($D_s$) is a better metric to measure the difference of attributions as it emphasizes the relative importance among different features rather than the absolute magnitude of each individual feature. Our method can be trivially extended to the scenarios using cosine distance ($D_c = 1-D_s(g^y(\vx+\vdelta), g^y(\vx))$) as the dissimilarity function $D$ defined in the formulation (\ref{eq:opt_cert_ar}) with simple modifications.

\begin{restatable}{corollary}{CertCosSim}\label{cor:cert_cos_sim}
  Given a twice-differentiable classifier $f: \mathbb{R}^d\rightarrow\mathbb{R}^k$ and  its attribution $g^y$ on label $y$,
  for all perturbations $\Vert\vdelta\Vert_p\leq\varepsilon$, if the Euclidean distance of $g^y(\vx+\vdelta)$ and $g^y(\vx)$ is upper bounded by $T(\varepsilon; \vx)$, and $0\leq T(\varepsilon;\vx)\leq\Vert g^y(\vx)\Vert_2$, then their cosine distance ($D_c$) is upper bounded by
  \begin{equation}
    D_c(g^y(\vx+\vdelta), g^y(\vx))\leq 1 - \sqrt{1-\frac{T(\varepsilon;\vx)^2}{\Vert g^y(\vx)\Vert^2_2}}.\label{eqn:ub_cos}
  \end{equation}
\end{restatable}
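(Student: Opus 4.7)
The plan is to translate the cosine-distance bound into a lower bound on cosine similarity and reduce it to a one-variable optimization. Let $u = g^y(\vx)$ and $v = g^y(\vx+\vdelta)$, and set $r = \Vert u\Vert_2$, $s = \Vert v\Vert_2$, $T = T(\varepsilon;\vx)$. The hypothesis gives $\Vert v - u\Vert_2 \leq T$ with $T \leq r$. By the definition of cosine distance, $D_c(v, u) = 1 - \langle u, v\rangle / (r s)$, so proving the stated upper bound is equivalent to establishing
$$\frac{\langle u, v\rangle}{rs} \;\geq\; \sqrt{1 - T^2/r^2}.$$

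First I would expand the Euclidean constraint via the polarization identity: $\Vert v - u\Vert_2^2 = r^2 + s^2 - 2\langle u, v\rangle \leq T^2$, which rearranges to $\langle u, v\rangle \geq \tfrac{1}{2}(r^2 + s^2 - T^2)$. Dividing by $rs$, the cosine similarity is bounded below by
$$\phi(s) \;=\; \frac{r^2 + s^2 - T^2}{2rs} \;=\; \frac{r}{2s} + \frac{s}{2r} - \frac{T^2}{2rs}.$$
Since $v$ is free to be any attribution in the ambient space, I would treat $s$ as a free positive variable and minimize $\phi$ on $s > 0$; a one-line calculus step gives the unique critical point $s^\ast = \sqrt{r^2 - T^2}$, which is real and positive precisely because the corollary assumes $T \leq r$. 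Substituting back yields $\phi(s^\ast) = \sqrt{1 - T^2/r^2}$, and convexity of $\phi$ on $s > 0$ confirms this is the global minimum. Rearranging $\cos\theta \geq \sqrt{1 - T^2/r^2}$ into the form stated for $D_c$ finishes the argument.

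A conceptually cleaner sanity check, which I would include as a remark, is the purely geometric picture: $v$ lies in the closed ball of radius $T$ about $u$, and the angle between $u$ and $v$ (through the origin) is maximized exactly when the ray from the origin through $v$ is tangent to that ball, giving $\sin\theta_{\max} = T/r$ and hence $\cos\theta \geq \sqrt{1 - T^2/r^2}$. This matches the algebraic bound and clarifies why the condition $T \leq \Vert g^y(\vx)\Vert_2$ is needed: beyond that threshold the origin lies inside the ball, $v$ may be anti-parallel to $u$, and no nontrivial lower bound on $\cos\theta$ can exist.

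The main obstacle is precisely this tightness-versus-feasibility issue around $s$: allowing $s$ to vary freely is a relaxation, so I must justify that the resulting $\phi(s^\ast)$ is still a valid lower bound and explain why the hypothesis $0 \leq T \leq r$ is exactly the range in which the square root is real and the bound is meaningful. I would close by checking the two endpoints to confirm correct behavior, namely $T = 0$ forces $D_c \leq 0$ (consistent with $v = u$) and $T = r$ gives $D_c \leq 1$ (the trivial bound, consistent with $v$ being possibly orthogonal to $u$).
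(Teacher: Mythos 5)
Your proposal is correct, but your primary argument is genuinely different from the paper's. The paper's proof is a two-line geometric one: it asserts from the tangency picture in Fig.~\ref{fig:illustrate} that $\sin(g^y(\vx+\vdelta),g^y(\vx)) \leq T(\varepsilon;\vx)/\Vert g^y(\vx)\Vert_2$ and then writes $\cos = \sqrt{1-\sin^2}$ --- essentially your ``sanity check'' remark promoted to the whole proof. Your main route instead works algebraically: polarization gives $\langle u,v\rangle \geq \tfrac{1}{2}(r^2+s^2-T^2)$, you divide by $rs$ and minimize the resulting $\phi(s)$ over $s>0$, landing on $\phi(s^\ast)=\sqrt{1-T^2/r^2}$ at $s^\ast=\sqrt{r^2-T^2}$. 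This buys you two things the paper leaves implicit: it actually \emph{proves} the sine inequality rather than reading it off a 2D figure, and it makes transparent both where $T\leq r$ enters (reality of $s^\ast$, convexity of $\phi$ via $r^2-T^2\geq 0$) and why $\cos\geq 0$ holds so that $\cos=\sqrt{1-\sin^2}$ is the correct branch --- a point the paper's proof silently assumes. Your handling of the relaxation is also sound: minimizing over all $s>0$ rather than over the attainable range $[r-T,\,r+T]$ can only decrease the minimum, so the lower bound on the cosine similarity remains valid (and in fact $s^\ast$ lies in that interval, so nothing is lost). The only nit is calling $D_c\leq 1$ at $T=r$ ``the trivial bound'': since $D_c$ ranges over $[0,2]$, the statement $\cos\theta\geq 0$ is not vacuous, though your orthogonality interpretation is right.
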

This upper bound is valid when the assumption that $0\leq T(\varepsilon;\vx)\leq\Vert g^y(\vx)\Vert_2$ is satisfied, \emph{i.e.}, the variation of attribution distance is smaller than the original attribution. As shown in Fig.~\ref{fig:illustrate}, the angle between original and perturbed attributions is bounded by $\theta$ computed from the corollary. %

\section{Experimental results}\label{sec:exp}
\begin{table*}
  \centering
  \caption{Evaluation of upper bound without the label constraint.}\label{tbl:eval_unlabel}
  \resizebox{.9\textwidth}{!}{%
  \begin{tabular}{lccccc|ccccc|ccccc}
    \toprule
     & \multicolumn{5}{c}{SM} & \multicolumn{5}{c}{Input*gradient} & \multicolumn{5}{c}{IG}\\
     \cmidrule(r){2-6}\cmidrule(r){7-11}\cmidrule(r){12-16}
     &$\widehat{T}_e$ &$T_e$&$T'_e$&$\widehat{T}_c$ &$T_c$&$\widehat{T}_e$ &$T_e$&$T'_e$&$\widehat{T}_c$ &$T_c$&$\widehat{T}_e$ &$T_e$&$T'_e$&$\widehat{T}_c$ &$T_c$\\
    \midrule
    $\ell_2$&0.09&0.31&0.34&6.88&7.41  &0.07&0.46&0.46&0.51&2.60  &0.02&0.17&0.17&1.80&3.84\\
    $\ell_\infty$&0.41&0.85&-&21.87&27.09 &0.07&0.69&-&7.03&50.59&0.25&0.52&-&23.24&35.00\\
    \bottomrule
  \end{tabular}
  }
\end{table*}
In this section, we evaluate the effectiveness of proposed upper bounds by numerical experiments under both $\ell_2$ and $\ell_\infty$-norms. %
In the following results, we compute the theoretical upper bounds for adversarial robust models and attributional robust models, including \emph{Adversarial Training} (AT) \cite{madry2018towards}, IG-NORM \cite{chen2019robust}, \emph{Adversarial Attributional Training} with robust training loss (AdvAAT) \cite{ivankay2020far}, \emph{Attributional Robustness Training} (ART) \cite{singh2020attributional}, TRADES \cite{zhang2019theoretically} and \emph{Integrated Gradients Regularizer} (IGR) \cite{wang2022exploiting}. We follow previous attribution robustness studies to use ResNet-18 to evaluate CIFAR-10 \cite{krizhevsky2009learning}, and use the neural network with four convolutional layers followed by three fully-connected layers to evaluate MNIST \cite{lecun2010mnist} and Fashion-MNIST \cite{xiao2017fashion}. The proposed upper bounds are also scalable to datasets with larger images, but due to the scalability problem of existing attribution defense methods, we provide evaluations of Flower \cite{nilsback2006visual} and ImageNet \cite{deng2009imagenet} on attributional non-robust models in Appendix~\ref{append:eval_larger}.

For each selected model, the theoretical upper bounds for both Euclidean distance and cosine distance are computed. We convert the cosine values to degrees for easier comparison. The theoretical bounds are compared with corresponding distance between original sample and attacked sample to verify the effectiveness of the bounds. We denote the theoretical upper bounds for Euclidean distance and cosine distance as $T_e$ and $T_c$, respectively. The $\ell_2$ PGD-20 attack \cite{madry2018towards} is implemented for $\ell_2$-norm bounded case. The 200-step IFIA with the top-$k$ intersection as dissimilarity function \cite{ghorbani2019interpretation} is implemented for $\ell_\infty$-norm bounded case, where $k$ is 100 for MNIST and Fashion-MNIST and 1000 for CIFAR-10. Each sample is attacked 20 times and the mean distance is computed. The sample mean Euclidean and cosine distances of the entire dataset under corresponding attacks are denoted by $\widehat{T}_e$ and $\widehat{T}_c$, respectively. All the experiments are implemented on NVIDIA GeForce RTX 3090 \footnote{Source code will be provided later.}.

In addition, we also provide a generalization of the proposed bounds based on the generalization of Theorem~\ref{thm:l2_unlabel} (Appendix~\ref{append:general_thm}) that adaptively multiply a scalar $c$ for an given input $\vx$ in case that the weak assumption is violated in rare cases. Explicitly, the adaptive value of $c$ for $i$-th sample is given as follows (details in Appendix~\ref{append:find_k})
\begin{equation}
  c^{(i)} = \max\left\{1, \frac{\Vert g^y(\vx^{(i)} + \varepsilon\bm{v}^{(i)}_{max})-g^y(\vx^{(i)})\Vert_2 }{\xi^{(i)}_{max}\varepsilon}\right\}. \label{eqn:modified_k}
\end{equation}

\subsection{Evaluation of upper bounds without the label constraint}%
We first evaluate the upper bounds without label constraints, which can be applied to any gradient-based attribution method. Here three methods are evaluated, saliency map, input*gradient and integrated gradients. The upper bounds computed from TRADES+IGR on CIFAR-10 are presented in Table~\ref{tbl:eval_unlabel}, and results from other models are deferred to Appendix~\ref{append:exp_unlabel}. We use the unlabelled upper bound introduced in Theorem~\ref{thm:l2_unlabel} and \ref{thm:linf_unlabel} to compute $T_e=\xi_{max}\varepsilon$ and extend it to $T_c$ using Eq.~\ref{eqn:ub_cos}. The perturbation size is chosen to be 0.1 for $\ell_2$ and 0.25 for $\ell_\infty$. The generalized upper bound $T'_e=c\xi_{max}\varepsilon$ (Eq.~\ref{eqn:modified_k}) is also provided for $\ell_2$ case, and is not necessary for $\ell_\infty$ case. As observed in the table, the proposed upper bounds are valid for different attribution methods and both Euclidean and cosine distances are well-bounded. More precisely, none of the $\ell_\infty$ perturbed attributions is outside the theoretical bound and none of the $\ell_2$ perturbed attributions is outside the generalized bound.

\subsection{Evaluation of upper bounds under \texorpdfstring{$\ell_2$}{L2}-norm constraint and label constraint}%
To evaluate the upper bound of attribution deviations after samples being attacked by $\ell_2$-norm constrained perturbations,
we use the method provided in Sec.~\ref{sec:cert_labeled} and \ref{sec:cert_cos} to obtain the theoretical upper bounds for both Euclidean distance ($T_e=\xi_{max}\varepsilon$) and cosine distance ($T_c$ in Eq.~\ref{eqn:ub_cos}). Integrated gradients (IG) is chosen here as the theoretical bound with the label constraint is based on the axiom of completeness. Besides, as discussed in Sec.~\ref{sec:l2_nonlabel}, the percentages of attacked attribution outside $T_e$ are provided, and the generalized bound is also calculated and denoted by $T'_e=c\xi_{max}\varepsilon$. Since $T'_e$ bounds all the attacked attributions, \emph{i.e.}, 100\% for all the models, we do not report the percentages in the table.  %
From Table~\ref{tbl:eval_label_l2}, we observe the following results. (i) The percentages are low, which supports our assumption in Sec.~\ref{sec:l2_nonlabel} that $g^y$ is locally linear. (ii) The computed values for both Euclidean ($T'_e$) and cosine distance ($T_c$) successfully bound the attribution differences above for every dataset and every model. In addition, we also show the minimum Euclidean gaps between samples and bounds in Section~\ref{append:tightness} to illustrate that the tightness of the proposed bounds.

\begin{table}
  \caption{Evaluation of upper bounds under $\ell_2$-norm constraint and label constraint. The numbers in the brackets indicate the percentages that attacked attribution is outside the $T_e$.}\label{tbl:eval_label_l2}
  \centering
  \resizebox{.47\textwidth}{!}{%
    \begin{tabular}{lccccc}
      \toprule
      Model &$\widehat{T}_e$ &$T_e$ &$T'_e$ &$\widehat{T}_c$($\deg$) &$T_c$($\deg$)\\
      \midrule
      $\varepsilon=0.05$ &\multicolumn{5}{c}{MNIST}\\
      \midrule
      AT &0.0685&0.1537 [2.25\%]&0.1596 &3.6935&7.0951\\
      IG-NORM &0.1158&0.2888 [2.00\%]&0.2967&3.3174&7.2615\\
      ART &0.0626&0.3591 [6.00\%]&0.3702&2.3923&6.8657\\
      AdvAAT &0.0876&0.3269 [6.20\%]&0.3404&1.9034&6.8992\\
      TRADES &0.1620&0.5060 [1.68\%]&0.5271&2.8374&6.9988\\
      TRADES+IGR &0.1784&0.4964 [1.32\%]&0.5145&2.9075&6.9779\\
      \midrule
      $\varepsilon=0.05$ &\multicolumn{5}{c}{Fashion-MNIST}\\
      \midrule
      AT & 0.0659 & 0.0700 [2.19\%] &0.0869& 10.1442& 12.9577  \\ 
      IG-NORM & 0.1181 & 0.1789 [0.00\%] &0.1789& 6.6002 & 8.8043 \\ 
      AdvAAT & 0.1115 & 0.1735 [6.20\%] &0.1858& 5.8544 & 9.3692 \\ 
      ART &0.0940&0.1387 [0.94\%]&0.1411&5.4507& 9.8234\\
      TRADES & 0.0626 & 0.0963 [4.16\%] &0.1184& 8.3521 & 12.0991 \\ 
      TRADES+IGR & 0.0403& 0.0453 [1.91\%] &0.0507& 7.7302& 8.8411  \\ 
      \midrule
      $\varepsilon=0.1$ &\multicolumn{5}{c}{CIFAR-10}\\
      \midrule
      AT&0.0392&0.2532 [0.09\%]&0.2533&2.7335&4.7724\\
      IG-NORM &0.0149&0.1582 [0.42\%]&0.1621&1.6505&4.3711\\
      AdvAAT&0.0374&0.2386 [0.06\%]&0.2386&0.2847&3.8202\\
      ART &0.0733&0.2278 [0.00\%]&0.2278&0.5918&4.2123\\
      TRADES&0.0264&0.1734 [0.16\%]&0.1734&1.9084&3.8686\\
      TRADES+IGR&0.0240&0.1692 [0.09\%]&0.1692&1.8011&3.8384\\
      \bottomrule
    \end{tabular}
  }
\end{table}

\begin{figure*}
  \centering
  \begin{subfigure}{.28\textwidth}
      \centering
      \includegraphics[width=\textwidth]{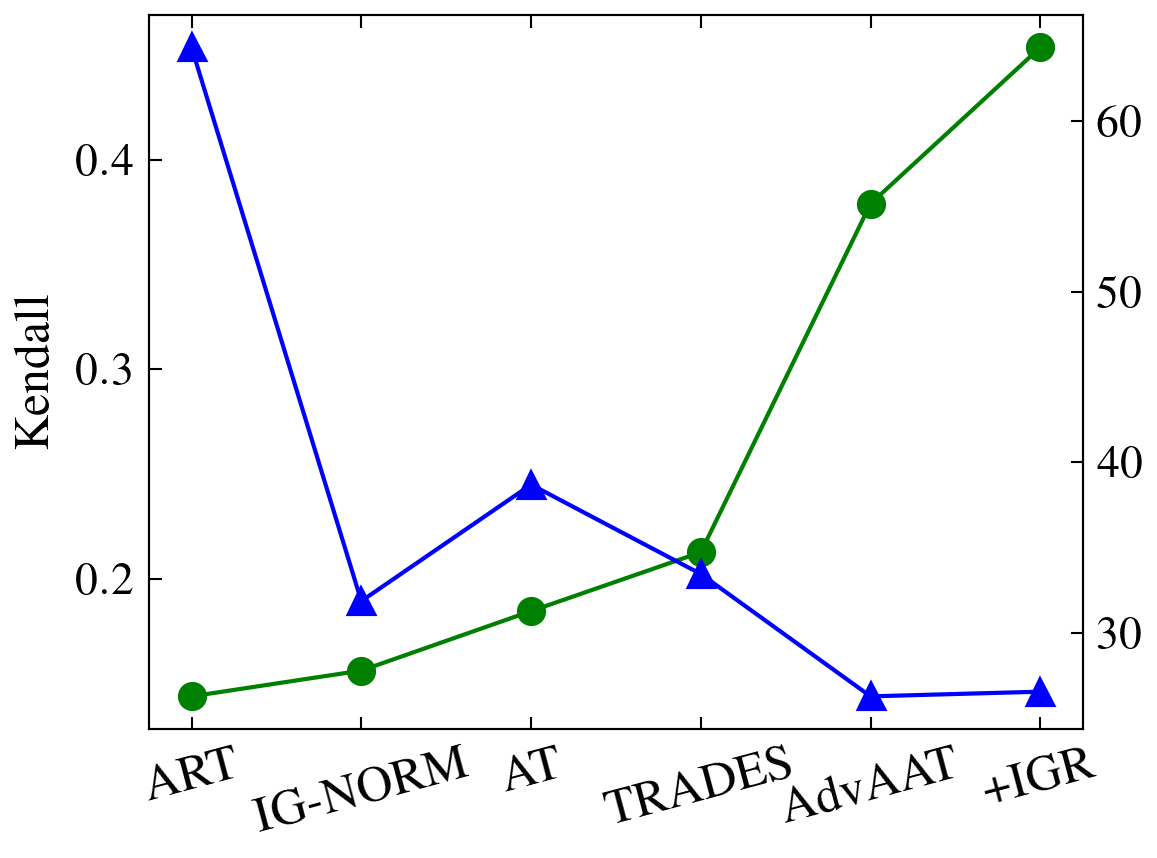}
      \caption{MNIST}
  \end{subfigure}
  \hfill
  \begin{subfigure}{.28\textwidth}
      \centering
      \includegraphics[width=\textwidth]{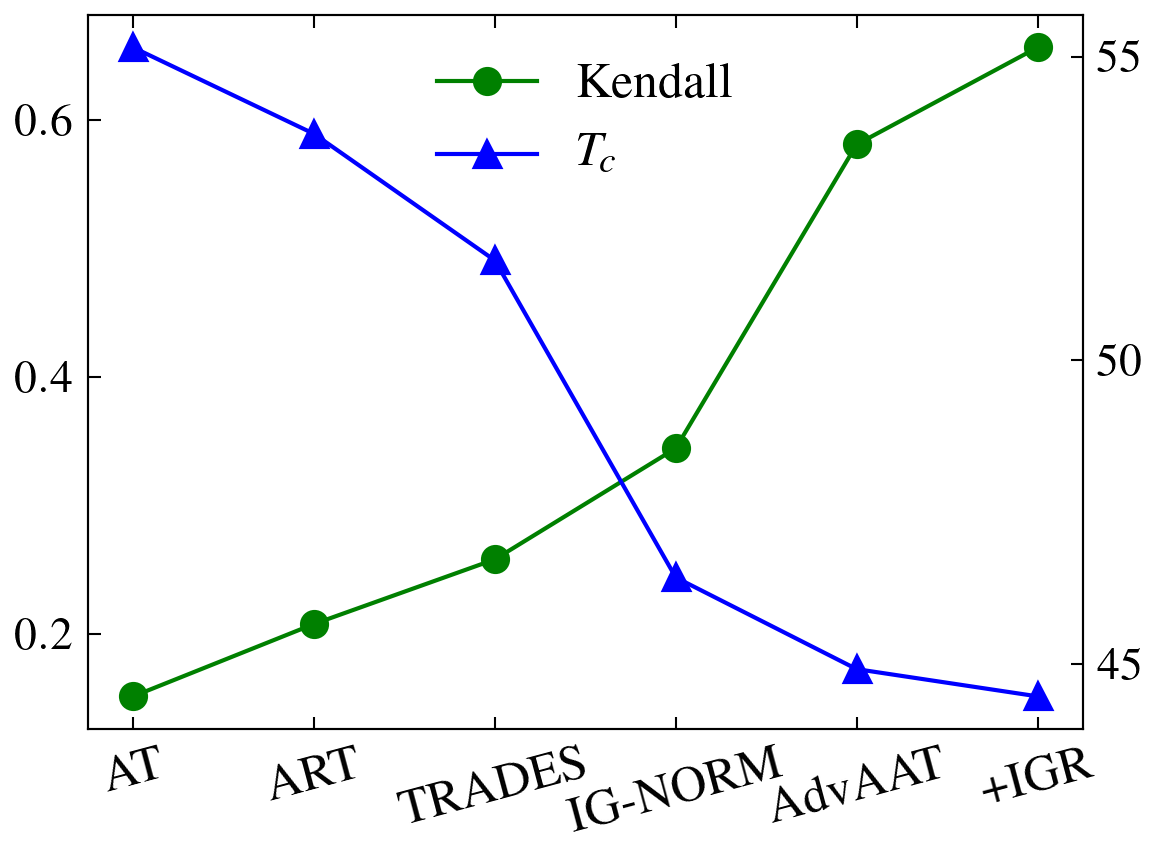}
      \caption{Fashion-MNIST}
  \end{subfigure}
  \hfill
  \begin{subfigure}{.28\textwidth}
      \centering
      \includegraphics[width=\textwidth]{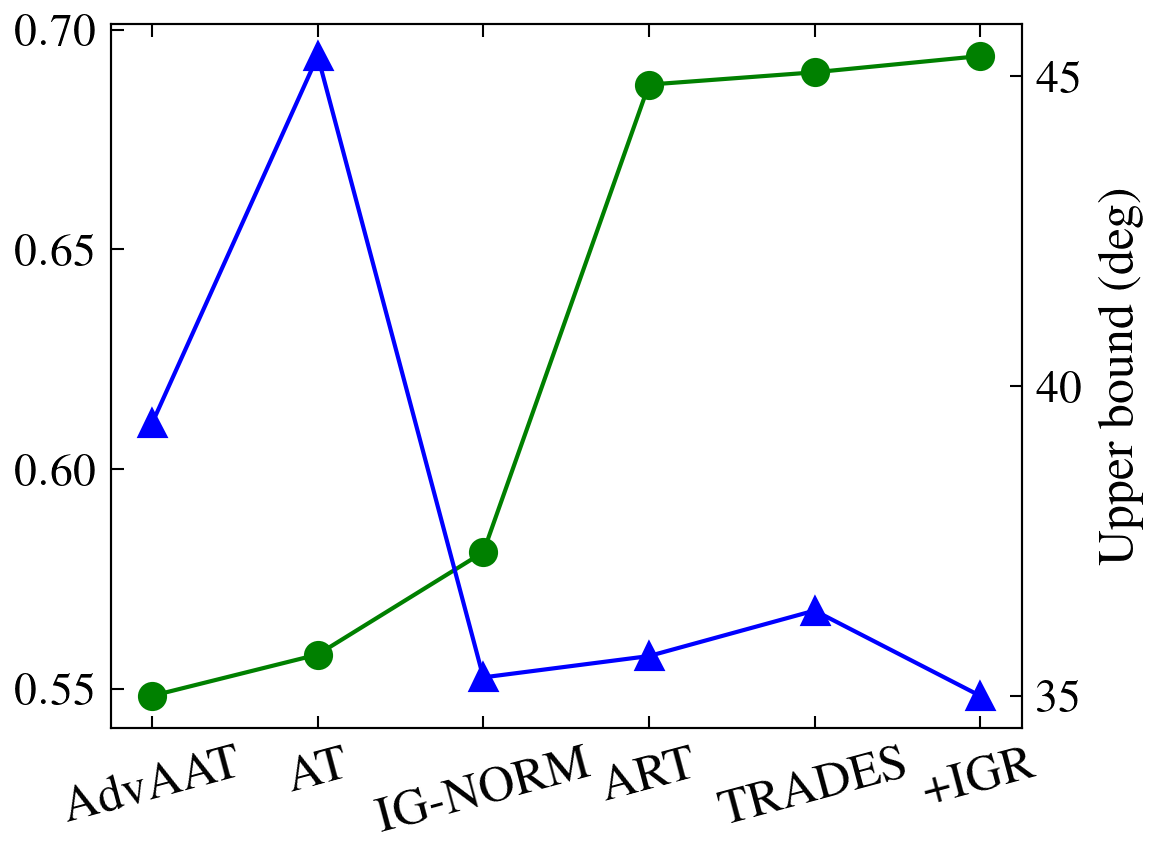}
      \caption{CIFAR-10}
  \end{subfigure}
  \caption{Comparison between Kendall's rank correlation and the theoretical bound of cosine distance. For clear comparison, we convert the cosine distance to angles in degrees.}\label{fig:kend_cos}
\end{figure*}

\subsection{Evaluation of upper bounds under \texorpdfstring{$\ell_\infty$}{Linf}-norm constraint and label constraint}%

\begin{table}
  \caption{Evaluation of upper bounds $\ell_\infty$-norm constraint and label constraint.}\label{tbl:eval_label_linf}
  \centering
  \resizebox{.47\textwidth}{!}{%
  \begin{tabular}{lccccc}
    \toprule
    Model &Kendall &$\widehat{T}_e$ &$T_e$  &$\widehat{T}_c$($\deg$) &$T_c$($\deg$)\\
    \midrule
    $\varepsilon=0.05$ &\multicolumn{5}{c}{MNIST}\\
    \midrule
    AT &0.1846&0.3461&0.7752  &15.3616&38.7024\\
    IG-NORM &0.1562&0.6836&1.2046 &16.4506&31.8791\\
    AdvAAT &0.3791&1.6269&2.1992 &11.3173&26.3000\\
    ART &0.1439&1.4193&2.8218&12.8025&64.3115\\
    TRADES &0.2127&1.1779&2.2216 &15.9881&33.4681\\
    TRADES+IGR &0.4537&1.2991&2.0386 &17.5923&26.5748\\
    \midrule
    $\varepsilon=0.05$ &\multicolumn{5}{c}{Fashion-MNIST}\\
    \midrule
    AT &0.1516& 0.0990& 0.2802   & 18.9720   & 55.1501\\
    IG-NORM &0.3446& 0.2384& 0.8819  & 12.6023    & 46.4270\\
    AdvAAT &0.5810& 0.1938& 0.9206  & 9.4499 & 44.9184\\
    ART &0.2079&0.1660&0.7215&9.6582&53.7281 \\
    TRADES &0.2582& 0.1042& 0.4536  & 13.7010 & 51.6448\\
    TRADES+IGR &0.6565& 0.0722& 0.2526  & 15.0947 & 44.4703\\
    \midrule
    $\varepsilon=0.1$ &\multicolumn{5}{c}{CIFAR-10}\\
    \midrule
    AT&0.5578&0.4058&0.7649&26.6195&45.3223\\
    IG-NORM &0.5811&0.1997&0.4783&21.6311&35.2981\\
    AdvAAT &0.5484&0.2293&0.5211&28.7342&39.3981\\
    ART &0.6875&0.3128&0.6734&31.0090&35.6422\\
    TRADES&0.6903&0.2322&0.5001&22.9779&36.3759\\
    TRADES+IGR&0.6940&0.2474&0.5236&23.2356&35.0009\\
    \bottomrule
  \end{tabular}
  }
\end{table}

\begin{table*}
  \centering
  \caption{Evaluation of tightness of the bounds in Euclidean distance for $\ell_2$ and $\ell_\infty$ cases.}\label{apxtbl:l2_gap}
  \resizebox{.8\textwidth}{!}{%
  \begin{tabular}{lccc|ccc|ccc}
    \toprule
    &\multicolumn{3}{c}{MNIST}&\multicolumn{3}{c}{Fashion-MNIST}&\multicolumn{3}{c}{CIFAR-10}\\
    \cmidrule(r){2-4}\cmidrule(r){5-7}\cmidrule(r){8-10}
    $\ell_2 (\varepsilon=)$ &0.05&0.1&0.2&0.05&0.1&0.2&0.1&0.2&0.3\\
    \midrule
    AT          & 0.0260 & 0.0320 & 0.0140 & 0.0078 & 0.0130 & 0.1207 & 0.0004 & 0.0045 & 0.0134 \\
    IG-NORM     & 0.0391 & 0.0597 & 0.0291 & 0.0121 & 0.0171 & 0.0742 & 0.0016 & 0.0210 & 0.0448 \\
    AdvAAT      & 0.0290 & 0.0465 & 0.0294 & 0.0103 & 0.0167 & 0.0183 & 0.0037 & 0.0090 & 0.0181 \\
    ART         & 0.0178 & 0.0115 & 0.0182 & 0.0029 & 0.0239 & 0.0011 & 0.0019 & 0.0031 & 0.0141 \\
    TRADES      & 0.0014 & 0.0032 & 0.0104 & 0.0037 & 0.0082 & 0.0723 & 0.0028 & 0.0048 & 0.0147 \\
    TRADES+IGR  & 0.0010 & 0.0038 & 0.0041 & 0.0064 & 0.0134 & 0.0385 & 0.0016 & 0.0100 & 0.0126 \\
    \midrule
    \midrule
    $\ell_\infty (\varepsilon=)$ &0.01&0.03&0.05&0.01&0.03&0.05&4/255&8/255&0.1\\
    \midrule
    AT          & 0.0021 & 0.0035 & 0.0117 & 0.0004 & 0.0352 & 0.0708 & 0.0025 & 0.0053 & 0.1381 \\
    IG-NORM     & 0.0001 & 0.0062 & 0.0105 & 0.0010 & 0.0590 & 0.1069 & 0.0026 & 0.0145 & 0.0003 \\
    AdvAAT      & 0.0004 & 0.0223 & 0.0901 & 0.0136 & 0.0847 & 0.1665 & 0.0448 & 0.1513 & 0.2078 \\
    ART         & 0.0082 & 0.0112 & 0.0233 & 0.0424 & 0.1049 & 0.1467 & 0.0118 & 0.0412 & 0.0870 \\
    TRADES      & 0.0001 & 0.0046 & 0.0026 & 0.0014 & 0.0337 & 0.0634 & 0.0068 & 0.0043 & 0.0530 \\
    TRADES+IGR  & 0.0016 & 0.0143 & 0.1389 & 0.0014 & 0.0375 & 0.0682 & 0.0016 & 0.0090 & 0.0197 \\
    \bottomrule
  \end{tabular}
  }
\end{table*}
In this subsection, the upper bounds of attribution deviations under $\ell_\infty$ attacks are presented. Instead of the much looser bound derived from $\ell_p$-norm relaxation (Corollary~\ref{cor:linf_unlabel}), the upper bound is computed from $T_e=\varepsilon\sqrt{\sum \vert P_{ij}\vert}$ as introduced in Sec.~\ref{sec:cert_labeled} and \ref{sec:cert_cos}
. Moreover, the empirical attribution robustness is also provided using Kendall's rank correlation \cite{kendall1948rank} for comparison of theoretical and empirical protection. It should be emphasized that all the previous attribution robustness studies are based on $\ell_\infty$-norm constraints. Kendall's rank correlation, which is non-differentiable, is used as one of the indexes to measure the difference between the original attributions and the attributions attacked by IFIA under $\ell_\infty$-norm constraints. From the results in Table~\ref{tbl:eval_label_linf}, we see that the computed theoretical upper bounds are valid to measure the worst-case attribution deviations. For each dataset and each model, the sample mean of attribution distances is strictly smaller than the theoretical distance. Because of the relaxation, all attacked attributions are bounded by $T_e$. There is no outlier, and there is no need to use the generalized bound $T'_e$. Moreover, the results also show that for a model with a larger Kendall's rank correlation, the theoretical cosine distance upper bound is more likely to be smaller (see Fig.~\ref{fig:kend_cos}), which means that the model is more difficult to be attacked. This also confirms that cosine similarity is positively correlated to Kendall's rank correlation as proposed in Wang \& Kong \cite{wang2022exploiting}.

\subsection{Evaluation of the tightness of bounds}\label{append:tightness}
In addition, we further report the minimum Euclidean gaps between samples and theoretical bounds in Table \ref{apxtbl:l2_gap} to measure the tightness of the provided bounds, which is defined as
\begin{equation}
  r = \min_{0\leq i\leq n} T^{(i)}_e - \widehat{T}_e^{(i)}
\end{equation}
Note that the superscript $(i)$ represents the $i$-th sample and $T^{(i)}_e$ is replaced by $T'^{(i)}_e$ in $\ell_2$-norm cases. $r$ is a straightforward measurement of the theoretical bound. We notice that although the mean of theoretical bounds sometimes are multiple times larger than the sample mean distance, the tightest bound can be only $10^{-4}$ greater than the sample distance. We also observe that the values of $r$ are all positive, which also indicates that there is no perturbed attribution that violates our theoretical bounds.

We also provide the visualizations of the distribution of the gap between theoretical bounds and attribution differences from real data in Appendix~\ref{append:additional_exp}. The values are directly computed using $T^{(i)}_e - \widehat{T}_e^{(i)}$. As we can observe from the figures, all values are positive, which verifies the validity of our bounds, and most of the values are lying close to $0$, which shows the tightness of the bounds.

\section{Conclusion}\label{sec:conclusion}
In this paper, an upper bound that measures the worst-case attribution deviations is proposed. The bound is formulated as the optimum value of a constrained optimization problem. The optimization problem is constrained on the size of perturbation and the unchanged classification label after being perturbed. For each of the two metrics of the attribution difference, Euclidean and cosine distances, the problem is solved based on the first-order Taylor series and the estimation of attribution gradients and the solution bounds the attribution deviations under both $\ell_2$ and $\ell_\infty$-norm attacks. Experimental results validate the effectiveness of the proposed bounds.

\section*{Acknowledgments and Disclosure of Funding}
    This work is partially supported by the Ministry of Education, Singapore through Academic Research Fund Tier 1, RG73/21.
  
{\small
\bibliographystyle{ieee_fullname}
\bibliography{cvpr_2023}
}

\newpage
\appendix
\onecolumn
\section{Proofs}

\subsection{Proof of Corollary~\ref{cor:linf_unlabel}}\label{proof:cor_linf_unlabel}
Before we prove Corollary~\ref{cor:linf_unlabel}, We first introduce the following lemma.
\begin{lemma}\label{lemma:norm}
  For $0< q < p$, the following inequality holds:
  \begin{equation}
      \Vert \vx\Vert_q \leq d^{\frac{1}{q}-\frac{1}{p}}\Vert \vx\Vert_p
  \end{equation}
  where $\vx\in\mathbb{R}^d$.
\end{lemma}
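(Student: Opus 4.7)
The plan is to prove the lemma by a direct application of Hölder's inequality with cleverly chosen exponents, which is the standard textbook route for comparing $\ell_q$ and $\ell_p$ norms on a finite-dimensional space. An equally clean alternative is Jensen's inequality applied to the convex function $t \mapsto t^{p/q}$; I will outline both but recommend the Hölder approach since it generalizes more transparently.

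The Hölder route proceeds as follows. Write
\[
\Vert \vx\Vert_q^q = \sum_{i=1}^d |x_i|^q \cdot 1.
\]
Apply Hölder's inequality with conjugate exponents $p/q$ and $p/(p-q)$ (these are valid conjugates because $p>q>0$ gives $p/q>1$ and $\tfrac{q}{p}+\tfrac{p-q}{p}=1$) to obtain
\[
\sum_{i=1}^d |x_i|^q \cdot 1 \leq \left(\sum_{i=1}^d |x_i|^p\right)^{q/p}\left(\sum_{i=1}^d 1\right)^{(p-q)/p} = \Vert\vx\Vert_p^{\,q}\cdot d^{(p-q)/p}.
\]
Raising both sides to the power $1/q$ gives
\[
\Vert\vx\Vert_q \leq d^{(p-q)/(pq)}\,\Vert\vx\Vert_p = d^{1/q-1/p}\,\Vert\vx\Vert_p,
\]
which is exactly the claim.

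For completeness, the Jensen alternative takes $t \mapsto t^{p/q}$, which is convex on $[0,\infty)$ since $p/q > 1$. Applying Jensen's inequality to the uniform average over $i=1,\ldots,d$ of the values $|x_i|^q$ yields
\[
\left(\frac{1}{d}\sum_{i=1}^d |x_i|^q\right)^{p/q} \leq \frac{1}{d}\sum_{i=1}^d |x_i|^p,
\]
which after rearrangement and taking a $p$-th root gives the same bound. No step here is an obstacle; the only mild subtlety is verifying that the conjugate exponent of $p/q$ is indeed $p/(p-q)$ and that $q>0$ allows taking the $q$-th root monotonically, so these should be stated explicitly. Once the lemma is in place, Corollary~\ref{cor:linf_unlabel} follows immediately by taking $q=2$ and bounding $\Vert\vdelta\Vert_2\leq d^{1/2-1/p}\Vert\vdelta\Vert_p\leq d^{1/2-1/p}\varepsilon$ and then chaining with Theorem~\ref{thm:l2_unlabel}.
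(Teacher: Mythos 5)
Your proof is correct and follows essentially the same route as the paper's: both apply H\"{o}lder's inequality with conjugate exponents $p/q$ and $p/(p-q)$ to $\sum_i |x_i|^q \cdot 1$ and then take the $1/q$-th power. The Jensen alternative you sketch is a valid variant but unnecessary; the main argument matches the paper's proof step for step.
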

\begin{proof}
  Consider $\bm{u}, \bm{v}\in\mathbb{R}^d$, using the H\"{o}lder's Inequality that for $m, n$ satisfying $\frac{1}{m}+\frac{1}{n}=1$, 
  \begin{equation}
      \sum_i \vert u_i\vert\vert v_i\vert\leq\left(\sum_i\vert u_i\vert^m\right)^{\frac{1}{m}}\left(\sum_i\vert v_i\vert^n\right)^{\frac{1}{n}}.
  \end{equation}
  If we take $\vert u_i\vert=\vert x_i\vert^q$, $v_i=1$, $m=\frac{p}{q}$ and $n=\frac{p}{p-q}$, we get
  \begin{equation}
      \sum_i\vert x_i\vert^q \leq\left(\sum_i \vert x_i\vert^p\right)^{\frac{q}{p}}d^{\frac{p-q}{p}}
  \end{equation}
  By taking the power of $\frac{1}{q}$ on both sides, we have
  \begin{equation}
      \left(\sum_i\vert x_i\vert^q\right)^\frac{1}{q} \leq \left(\sum_i\vert x_i\vert^p\right)^\frac{1}{p}d^{\frac{1}{q}-\frac{1}{p}}
  \end{equation} 
  which concludes the proof.
\end{proof}

\linfunlabel*
\begin{proof}
  Using Lemma~\ref{lemma:norm}, we have $\Vert\vdelta\Vert_2\leq d^{\frac{1}{2}-\frac{1}{p}}\Vert\vdelta\Vert_p$. Similar to the proof of Theorem~\ref{thm:l2_unlabel},
  \begin{equation}
    \Vert g^y(\vx+\vdelta) - g^y(\vx) \Vert_2^2 \leq \lambda_{max}\Vert\vdelta\Vert_2^2\leq \lambda_{max}\left(d^{\frac{1}{2}-\frac{1}{p}}\Vert\vdelta\Vert_p\right)^2\leq \lambda_{max}\left(d^{\frac{1}{2}-\frac{1}{p}}\varepsilon\right)^2
  \end{equation}
  Therefore,
  \begin{equation}
    \Vert g^y(\vx+\vdelta) - g^y(\vx) \Vert_2\leq d^{\frac{1}{2}-\frac{1}{p}}\xi_{max}\varepsilon
  \end{equation}
\end{proof}

\subsection{Proof of Theorem~\ref{thm:linf_unlabel}}\label{proof:thm_linf_unlabel}
\thmlinfunlabel*
\begin{proof}
  Recall that under the local linearity assumption, 
  \begin{equation}
    \Vert g^y(\vx + \vdelta) - g^y(\vx)\Vert_2^2\leq \vdelta^\top P \vdelta = \sum_{i,j}P_{ij}\delta_i\delta_j.
  \end{equation}
  Since $P_{ij}\leq\vert P_{ij}\vert$ and $\delta_i\delta_j\leq\Vert\vdelta\Vert^2_\infty\leq\varepsilon^2$ for all $i, j$, we can easily prove the theorem that
  \begin{equation}
    \Vert g^y(\vx + \vdelta) - g^y(\vx)\Vert_2^2 \leq \varepsilon^2\sum_{i,j}\vert P_{ij}\vert.
  \end{equation}
\end{proof}

\subsection{Proof of Proposition~\ref{prop:cond_label}}\label{proof:prop_cond_label}
\propcondlabel*
\begin{proof}
  Recall that we denote the gradient-based attribution satisfying the completeness axiom of $\vx$ on target label $y$ by $g^y(\vx)$, \emph{e.g.}, integrated gradients. Similarly, we denote the attribution on a different label $y'$ by $g^{y'}(\vx)$. Given the perturbation $\vdelta$, according to the above assumption, we can write that
  \begin{equation}
      g^y(\vx + \vdelta) = g^y(\vx) + \nabla g^y(\vx)^\top \vdelta
  \end{equation}
  Similarly, the approximation of $g^{y'}(\vx + \vdelta)$ is given by:
  \begin{equation}
      g^{y'}(\vx + \vdelta) = g^{y'}(\vx) + \nabla g^{y'}(\vx)^\top \vdelta
  \end{equation}
  According to the completeness axiom, given an all one vector $\Delta = (1, \ldots, 1)^\top$, we have
  \begin{equation}
      \Delta^\top g^{y}(\vx) = f_y(\vx).
  \end{equation}
  Consider the perturbation $\vdelta$, if $\vdelta$ does not change the label of $\vx$ from $y$ to $y'$, then $f_{y'}(\vx + \vdelta) < f_y(\vx+\vdelta)$, \emph{i.e.},
  \begin{equation}
      \Delta^\top g^{y'}(\vx+\vdelta) < \Delta^\top g^{y}(\vx+\vdelta),
  \end{equation}
  which gives
  \begin{equation}
      \Delta^\top g^{y'}(\vx) + \Delta^\top \nabla g^{y'}(\vx)^\top \vdelta < \Delta^\top g^{y}(\vx) + \Delta^\top \nabla g^{y}(\vx)^\top \vdelta.
  \end{equation}
  By rearranging the above inequality, we have 
  \begin{equation}
      \left(\left(\nabla g^{y'}(\vx) - \nabla g^{y}(\vx)\right)\Delta\right)^\top \vdelta < f_{y}(\vx) - f_{y'}(\vx).
  \end{equation}
\end{proof}

\subsection{Proof of Corollary~\ref{cor:cert_cos_sim}}\label{proof:cor_cert_cos_sim}
\CertCosSim*
\begin{proof}
  The corollary can be proved using the geometric property (see Fig.~\ref{fig:illustrate}) that 
  \begin{equation}
    \sin(g^y(\vx+\vdelta), g^y(\vx)) \leq \frac{T(\varepsilon;\vx)}{\Vert g^y(\vx)\Vert_2},
  \end{equation}
  and,
  \begin{align}
    \text{cosd}(g^y(\vx+\vdelta), g^y(\vx)) &= 1-\cos(g^y(\vx+\vdelta), g^y(\vx))\\
    &= 1-\sqrt{1-\sin^2(g^y(\vx+\vdelta), g^y(\vx))}\\
    &\leq 1- \sqrt{1 - \frac{T(\varepsilon;\vx)^2}{\Vert g^y(\vx)\Vert_2^2}}
  \end{align}
\end{proof}

\section{Analysis of local linearity assumption}\label{append:eval_locallinear}
\subsection{Evaluation of local linearity assumption of attribution functions}
The theories of this work are based on the local linearity assumption that $g^y(\vx)$ is linear within $\mathcal{B}_{\varepsilon}(\vx) = \{\vx + \vdelta \vert \Vert\vdelta\Vert_p\leq \varepsilon\}$. It is worth noting that such local linearity is a valid assumption for smooth functions, which can be achieved by both adversarial and attributional robust methods. Adversarial defense methods look for locally linearity functions to reduce the impact of adversarial attacks \cite{qin2019adversarial,yang2020closer}. Similarly, attributional defense methods train for smooth gradients to defend against attribution attacks \cite{wang2020smoothed}. It is also a common practice in related literature \cite{finlay2019scaleable,guo2019mixup,simon2019first,laidlaw2020perceptual,zhang2020does} to make similar assumptions. 

Furthermore, the validity of this assumption also depends on the size of $\vdelta$. The perturbation $\vdelta$ is restricted within a small $\ell_p$ ball around $\vx$ to ensure that the perturbed images are visually indistinguishable comparing to its original counterpart. The maximum allowable size $\varepsilon$ for $\vdelta$ is relatively small compared with the intensity range of the original image. When $\vdelta$ is small, the remainder of the Taylor series of $g^y(\vx)$ is negligible and the local linearity assumption is valid. As shown in Figure~\ref{apxfig:eta}, the value of $\eta(\vx, \vdelta) = \Vert g^y(\vx)-g^y(\vx+\vdelta)-\vdelta^\top\nabla g^y(\vx)\Vert_2$ is small and negligible when $\Vert\vdelta\Vert_\infty$ is small.

\begin{figure}
    \centering
    \begin{subfigure}{.32\textwidth}
      \centering
      \includegraphics[width=\textwidth]{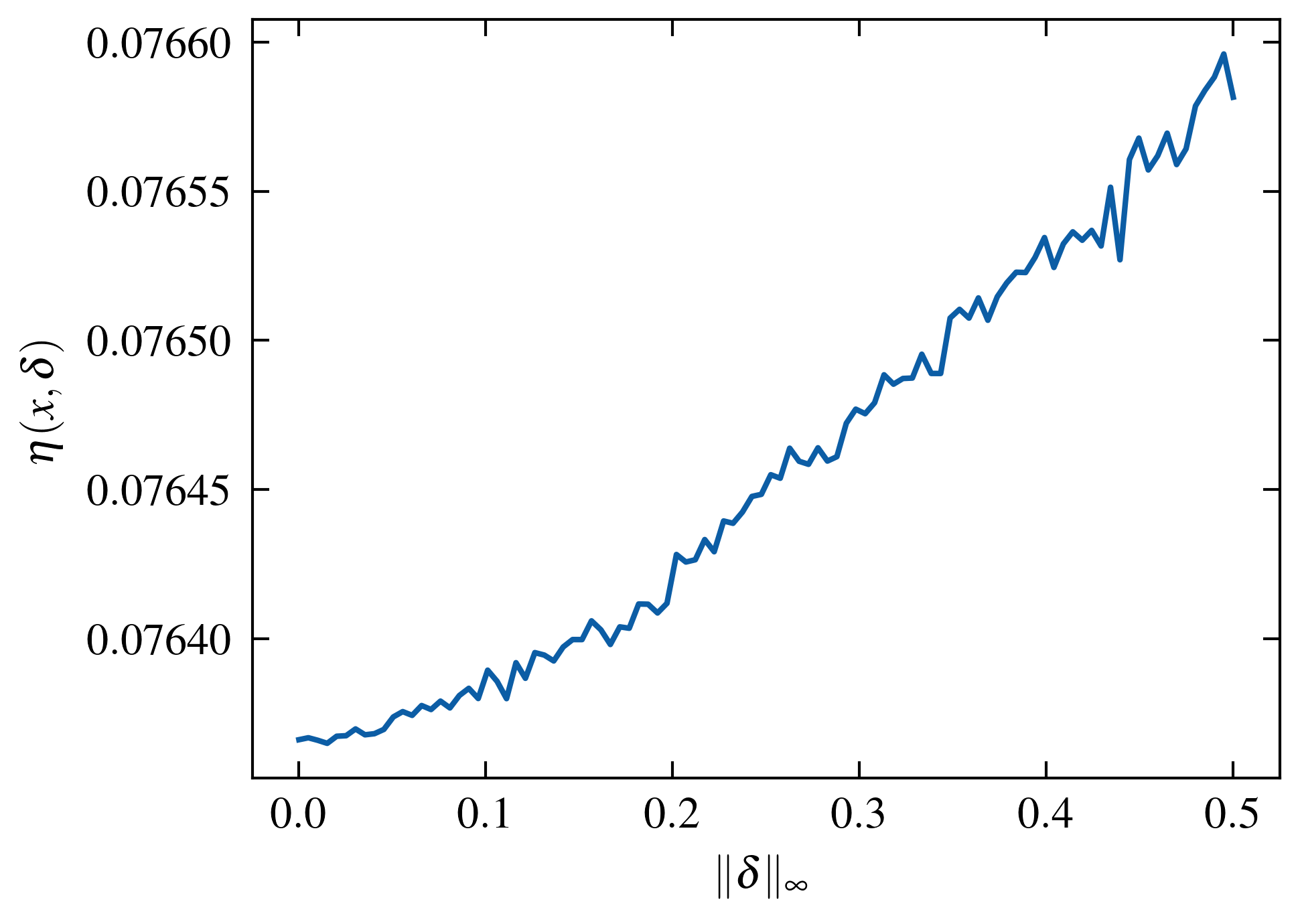}
      \caption{AT}
    \end{subfigure}
    \begin{subfigure}{.32\textwidth}
      \centering
      \includegraphics[width=\textwidth]{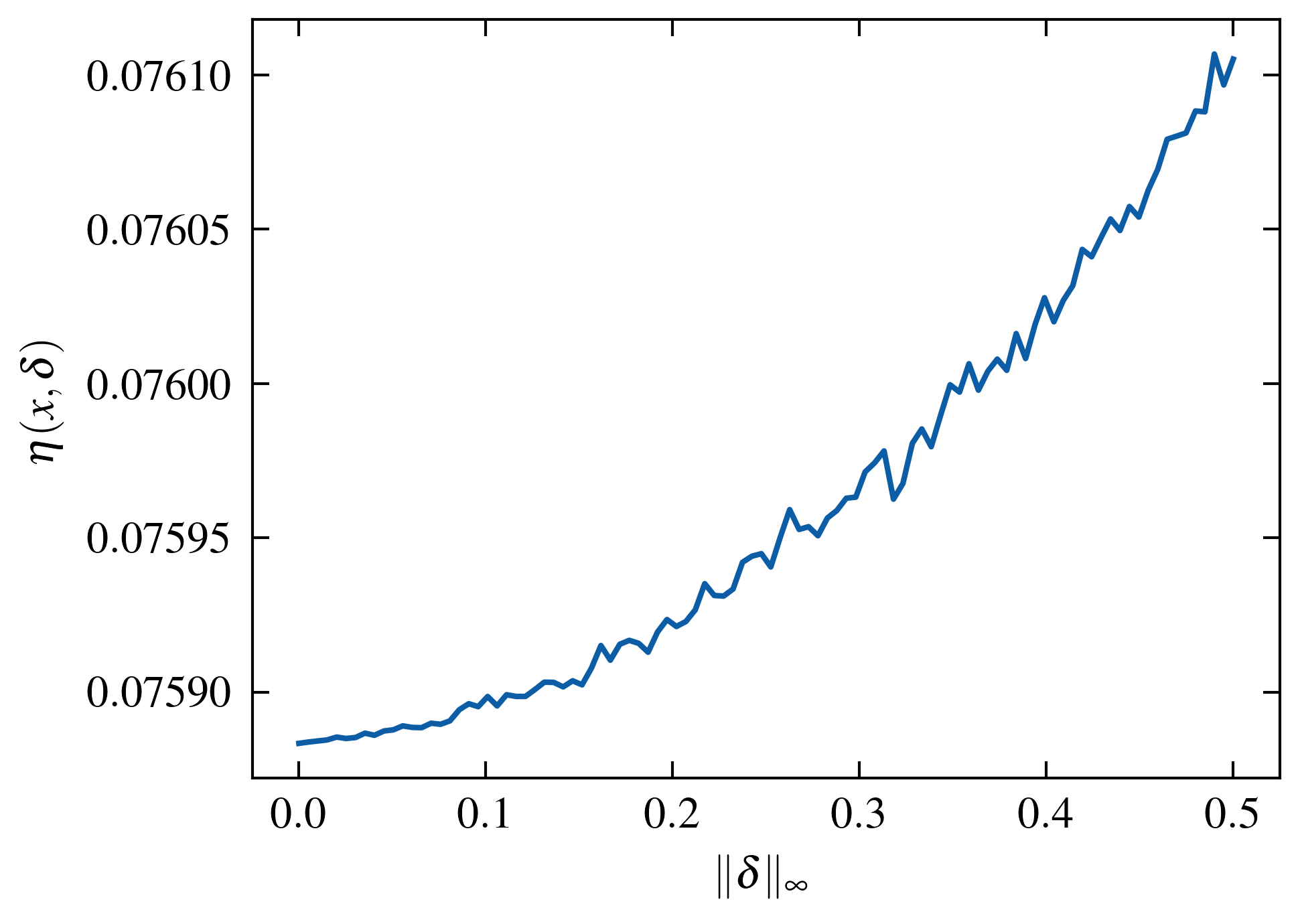}
      \caption{IG-NORM}
    \end{subfigure}
    \begin{subfigure}{.32\textwidth}
      \centering
      \includegraphics[width=\textwidth]{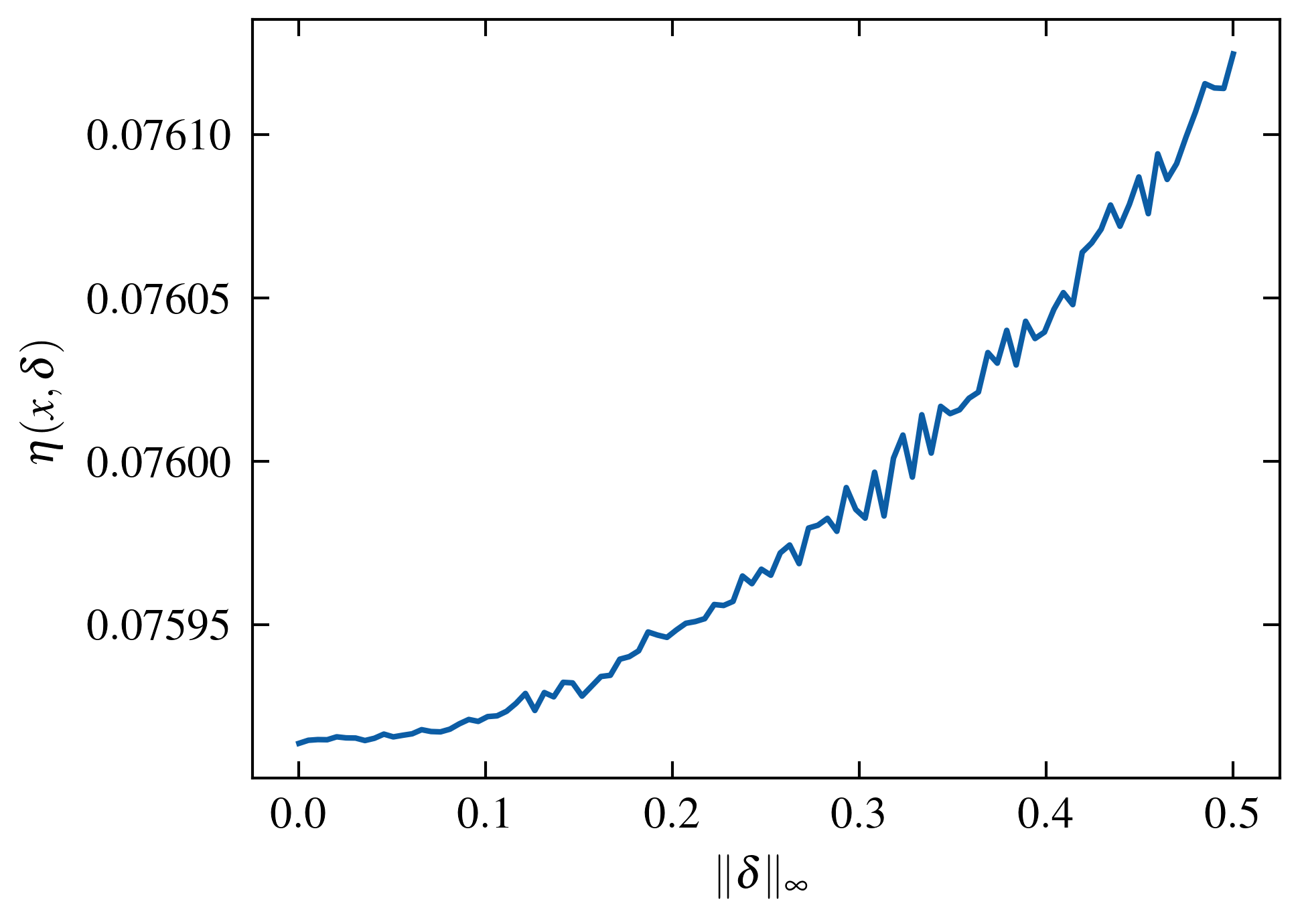}
      \caption{AdvAAT}
    \end{subfigure}
    \begin{subfigure}{.32\textwidth}
      \centering
      \includegraphics[width=\textwidth]{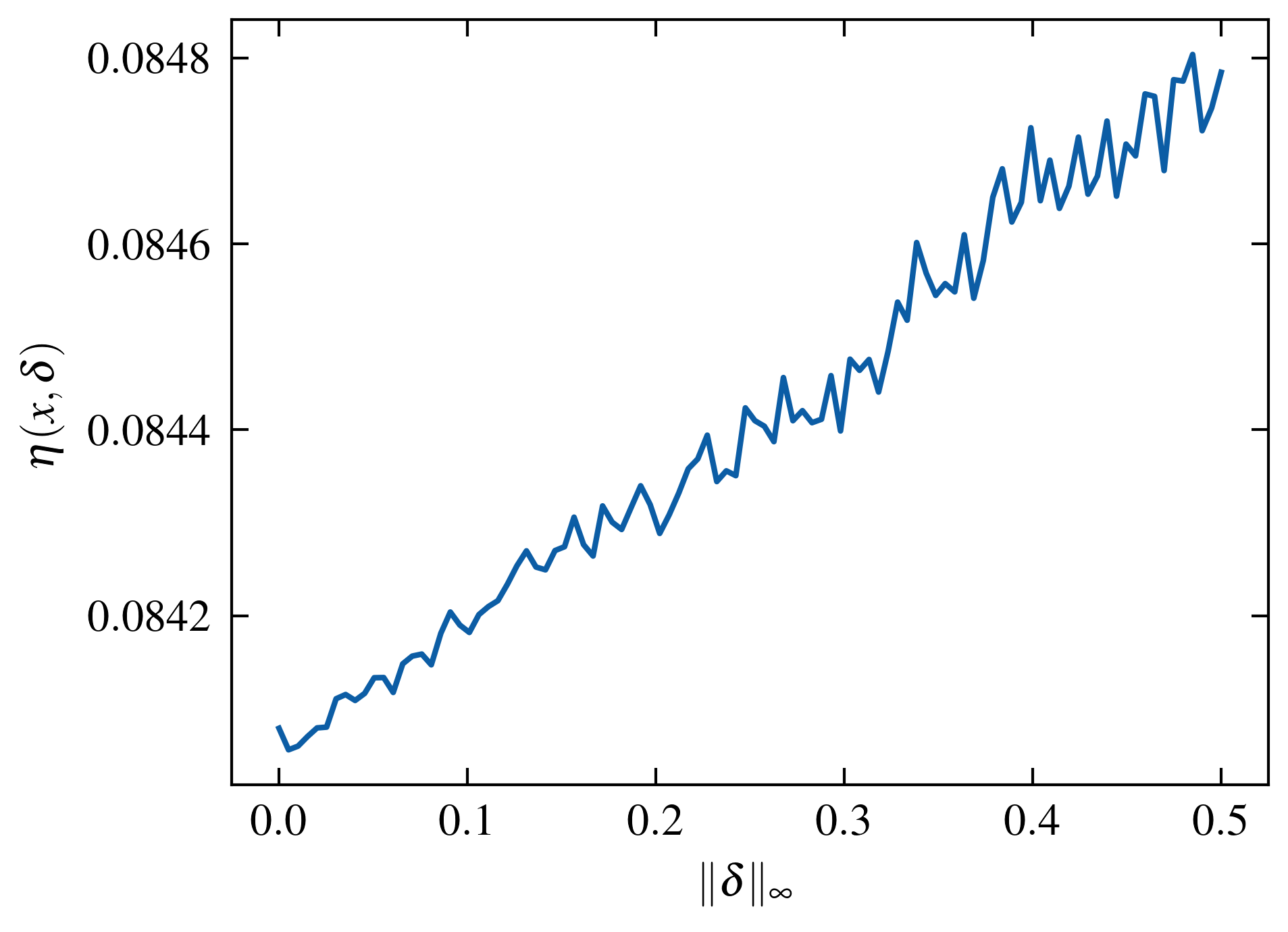}
      \caption{ART}
    \end{subfigure}
    \begin{subfigure}{.32\textwidth}
      \centering
      \includegraphics[width=\textwidth]{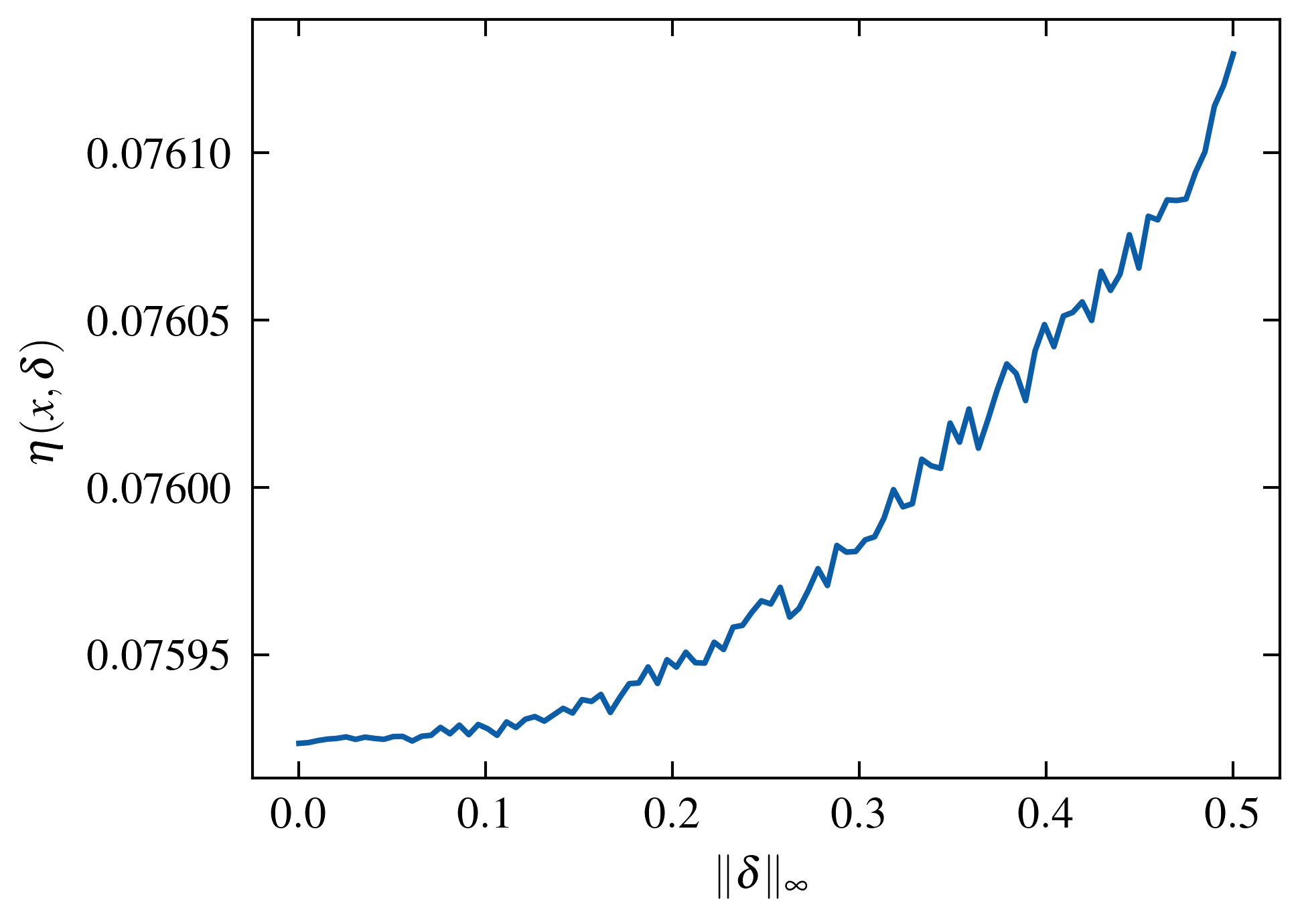}
      \caption{TRADES}
    \end{subfigure}
    \begin{subfigure}{.32\textwidth}
      \centering
      \includegraphics[width=\textwidth]{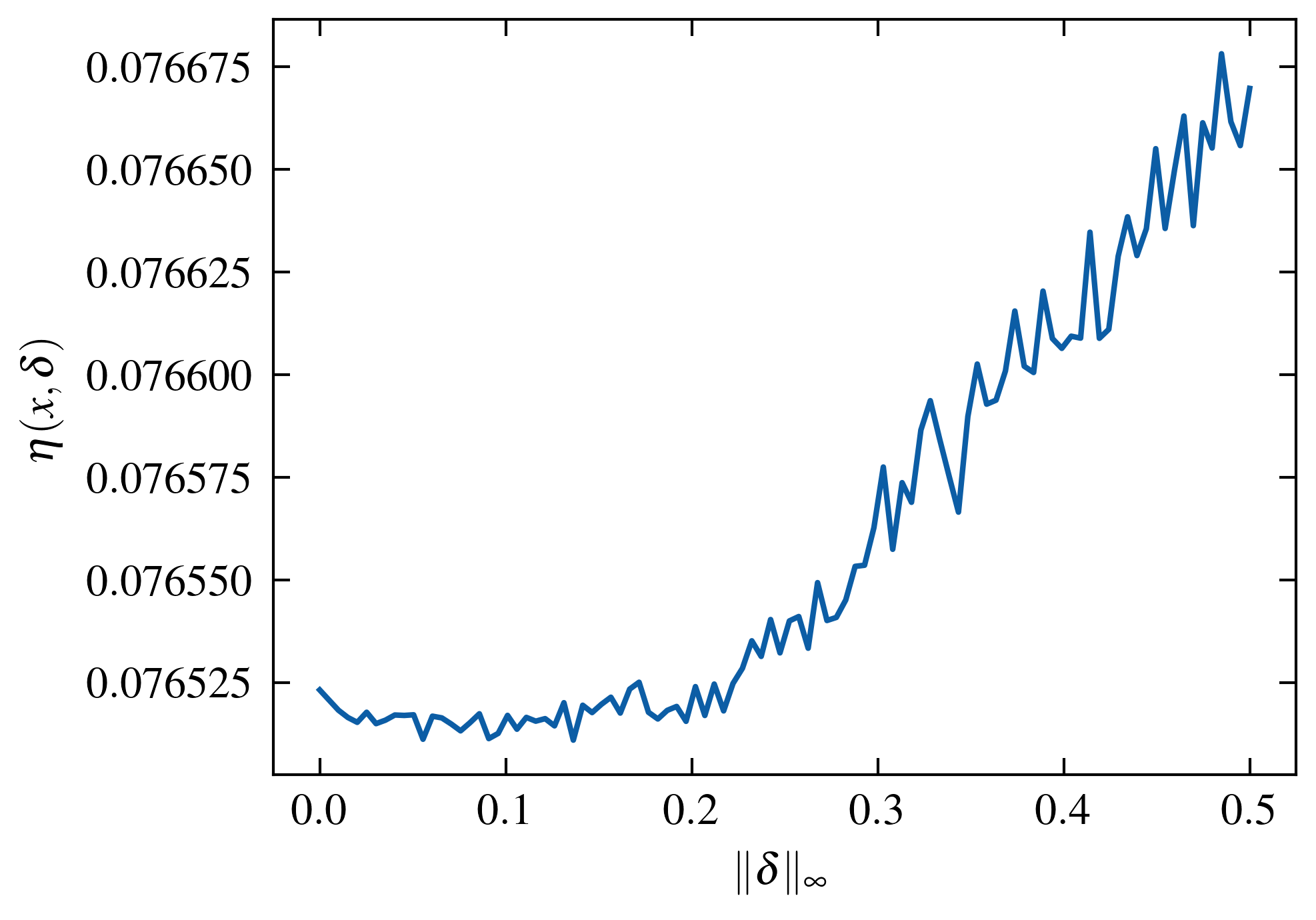}
      \caption{TRADES+IGR}
    \end{subfigure} 
    \caption{Values of $\eta$ for different $\Vert\vdelta\Vert_\infty$ computed from CIFAR-10 using integrated gradients. The magnitudes are ranging from $0.07$ to $0.09$ and are negligible comparing with the average norm of attributions which is $3.47$ on CIFAR-10.}\label{apxfig:eta}
\end{figure}  

\subsection{Generalization of Theorem~\ref{thm:l2_unlabel}}\label{append:general_thm}
\begin{theorem}
  Given a twice-differentiable classifier $f: \mathbb{R}^d\rightarrow\mathbb{R}^k$, and its attribution $g^y$ on label $y$, denote the Taylor series of $g^y(\vx+\vdelta)$ as $g^y(\vx)+\vdelta^\top\nabla g^y(\vx) + R_1(\vx)$. If $-(c-1)\vdelta^\top\nabla g^y(\vx) \preceq R_1(\vx) \preceq (c-1)\vdelta^\top\nabla g^y(\vx)$ for a constant $c\geq 1$, where $\preceq$ refers to element-wise less than or equal to, then for all perturbations $\Vert\vdelta\Vert_2\leq\varepsilon$, 
  $$\Vert g^y(\vx+\vdelta)-g^y(\vx)\Vert_2\leq c\xi_{max}\varepsilon,$$ 
  where $\xi_{max}$ is the largest singular value of $H=\nabla g^y(\vx)$.
\end{theorem}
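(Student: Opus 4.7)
The plan is to reduce the generalized statement directly to Theorem~\ref{thm:l2_unlabel} by showing that the element-wise bound on the Taylor remainder $R_1$ forces the full displacement $g^y(\vx+\vdelta)-g^y(\vx)$ to be element-wise bounded by $c$ times the linear part $\vdelta^\top \nabla g^y(\vx)$, after which the $\ell_2$-norm bound follows by squaring, summing, and invoking the largest-singular-value estimate from the original theorem.

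First I would set $\bm{u} := \vdelta^\top \nabla g^y(\vx)$ and use the Taylor expansion hypothesis to write $g^y(\vx+\vdelta)-g^y(\vx) = \bm{u} + R_1(\vx)$. The assumption $-(c-1)\bm{u} \preceq R_1 \preceq (c-1)\bm{u}$ is then added to $\bm{u}$ component-wise, yielding $(2-c)\bm{u} \preceq \bm{u}+R_1 \preceq c\bm{u}$ in the element-wise sense. From here the key observation is that for any coordinate $i$, both endpoints $(2-c)u_i$ and $c u_i$ lie in the interval $[-c|u_i|,\,c|u_i|]$; this uses only the hypothesis $c \geq 1$, which gives $|2-c|\leq c$. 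Therefore $|(g^y(\vx+\vdelta)-g^y(\vx))_i| \leq c\,|u_i|$ for every $i$.

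Squaring, summing over $i$, and taking the square root produces
\[
\|g^y(\vx+\vdelta) - g^y(\vx)\|_2 \leq c\,\|\vdelta^\top \nabla g^y(\vx)\|_2.
\]
At this point the bound derived in the proof of Theorem~\ref{thm:l2_unlabel}, namely $\|\vdelta^\top \nabla g^y(\vx)\|_2 \leq \xi_{\max}\varepsilon$ via the quadratic form $\vdelta^\top H H^\top \vdelta \leq \lambda_{\max}\varepsilon^2$, applies verbatim and closes the argument with $\|g^y(\vx+\vdelta) - g^y(\vx)\|_2 \leq c\,\xi_{\max}\varepsilon$.

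The only genuine obstacle is notational rather than analytical: the element-wise inequality $-(c-1)\bm{u}\preceq R_1\preceq (c-1)\bm{u}$ as literally written is only internally consistent when $\bm{u}\succeq 0$, since for a coordinate with $u_i<0$ the lower bound exceeds the upper bound. The clean way to handle this is to interpret the hypothesis coordinatewise as $|R_{1,i}|\leq (c-1)|u_i|$, which is exactly what is needed for the $|(\bm{u}+R_1)_i|\leq c|u_i|$ conclusion, and to present the deduction in absolute-value form so that both signs of $u_i$ are treated uniformly. Once that bookkeeping is in place, the proof is essentially a single-line corollary of Theorem~\ref{thm:l2_unlabel}.
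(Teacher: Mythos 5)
Your proposal is correct and follows essentially the same route as the paper's own proof: use the element-wise remainder bound to get $\Vert g^y(\vx+\vdelta)-g^y(\vx)\Vert_2\leq c\,\Vert\vdelta^\top\nabla g^y(\vx)\Vert_2$, then invoke the quadratic-form/largest-singular-value estimate from Theorem~\ref{thm:l2_unlabel}. Your added remark that the hypothesis $-(c-1)\vdelta^\top\nabla g^y(\vx)\preceq R_1\preceq (c-1)\vdelta^\top\nabla g^y(\vx)$ is only internally consistent coordinatewise when the linear term is nonnegative, and should be read as $\vert R_{1,i}\vert\leq (c-1)\vert(\vdelta^\top\nabla g^y(\vx))_i\vert$, is a legitimate point that the paper's proof glosses over.
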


\begin{proof}
  Based on the Taylor series of $g^y(\vx)$ and the above condition, we have
  \begin{align}
    \Vert g^y(\vx+\vdelta) - g^y(\vx) \Vert_2^2 \leq \Vert \vdelta^\top\nabla g^y(\vx) + (c-1)\vdelta^\top\nabla g^y(\vx)\Vert_2^2 &= c^2\vdelta^\top\nabla g^y(\vx)\nabla g^y(\vx)^\top\vdelta\\
    &= c^2\frac{\vdelta^\top}{\Vert\vdelta\Vert_2}P \frac{\vdelta}{\Vert\vdelta\Vert_2} \cdot \Vert\vdelta\Vert^2_2 \\
    &\leq c^2\lambda_{max}\Vert\vdelta\Vert^2_2\leq c^2\lambda_{max}\varepsilon^2\label{apxeqn:equality}
  \end{align}
  where $\lambda_{max}$ is the largest eigenvalue of $P = HH^\top = \nabla g^y(\vx)\nabla g^y(\vx)^\top$, and $\bm{v}_{max}$ is the corresponding eigenvector. The equality in Eq.~\ref{apxeqn:equality} is achieved when $\vdelta$ is $\varepsilon\bm{v}_{max}$ or $-\varepsilon\bm{v}_{max}$. Since the singular values of $H$ are equal to the square root of the eigenvalues of $P$, then,
  \begin{equation}
    \Vert g^y(\vx+\vdelta) - g^y(\vx) \Vert_2 \leq c\sqrt{\lambda_{max}}\varepsilon = c\xi_{max}\varepsilon.
  \end{equation}
\end{proof}
This is a generalized version of Theorem~\ref{thm:l2_unlabel} that is applicable for all twice-differentiable classifiers. Under local linearity assumption, $R_1(\vx)=0$, which means $c=1$, the result coincides with the original version of Theorem~\ref{thm:l2_unlabel}.

\subsection{Derivation of Eq.~(\ref{eqn:modified_k})}\label{append:find_k}
By Taylor expansion, $g^y(\vx+\vdelta) - g^y(\vx) = \vdelta^\top\nabla g^y(\vx) + R_1(\vx)$, where $R_1$ is the first order Taylor remainder. Thus, we have
\begin{equation}
  \Vert R_1(\vx)\Vert_2 \geq \Vert g^y(\vx+\vdelta) - g^y(\vx)\Vert_2 - \Vert\vdelta^\top\nabla g^y(\vx)\Vert_2
\end{equation}
Take $c=\frac{\Vert R_1(\vx)\Vert_2}{\Vert\vdelta^\top\nabla g^y(\vx)\Vert_2} + 1$, 
\begin{equation}
  \Vert\vdelta^\top\nabla g^y(\vx)\Vert_2 + \Vert R_1(\vx)\Vert_2 = c\Vert\vdelta^\top\nabla g^y(\vx)\Vert_2,
\end{equation}
and it would be the worst-case for the linear assumption when $\vdelta=\varepsilon\bm{v}_{max}$. By taking $\varepsilon\bm{v}_{max}$ as $\vdelta$, $\Vert R_1(\vx)\Vert_2$ can be estimated by
\begin{equation}
  \max\left\{0, \Vert g^y(\vx + \varepsilon\bm{v}_{max}) - g^y(\vx)\Vert_2 - \Vert\varepsilon\bm{v}_{max}^\top \nabla g^y(\vx)\Vert_2\right\}. \label{eqn:est_r1}
\end{equation}
Since $\Vert g^y(\vx + \varepsilon\bm{v}_{max}) - g^y(\vx)\Vert_2 - \Vert\varepsilon\bm{v}_{max}^\top \nabla g^y(\vx)\Vert_2 \leq \Vert R_1(\vx)\Vert_2$. Putting Eq.~(\ref{eqn:est_r1}) into $c$ and using the result in Eq.~(\ref{eqn:ub_l2}), we have
\begin{align}
  c &= \max\left\{0, \frac{\Vert g^y(\vx + \varepsilon\bm{v}_{max}) - g^y(\vx)\Vert_2 - \Vert\varepsilon\bm{v}_{max}^\top \nabla g^y(\vx)\Vert_2}{\xi_{max}\varepsilon}\right\} + 1\\
  &=\max\left\{1, \frac{\Vert g^y(\vx + \varepsilon\bm{v}_{max})-g^y(\vx)\Vert_2 }{\xi_{max}\varepsilon}\right\}.
\end{align}

\section{Analysis of attribution gradients}\label{append:analyse_attr_grad}
\subsection{The gradient of integrated gradients}
We provide the justification showing that the gradient of IG is diagonal-dominated. Consider that
    \begin{equation}
        \text{IG}(\vx)_i = x_i\times\frac{1}{m}\sum_{\alpha=1}^m \frac{\partial f(\frac{\alpha}{m}\vx)}{\partial x_i}
    \end{equation}
    and
    \begin{equation}
        \nabla \text{IG}(\vx)_{ij} = \frac{\partial \text{IG}(\vx)_i}{\partial x_j}
    \end{equation}
    If $i\neq j$, then
    \begin{equation}
        \frac{\partial \text{IG}(\vx)_i}{\partial x_j} = x_i\cdot \frac{1}{m}\sum_{\alpha=1}^m\frac{\partial^2 f(\frac{\alpha}{m}\vx)}{\partial x_i\partial x_j}\times\frac{\alpha}{m}
    \end{equation}
    If $i=j$, then
    \begin{equation}
        \frac{\partial \text{IG}(\vx)_i}{\partial x_j} = \frac{1}{m}\sum_{\alpha=1}^m \frac{\partial f(\frac{\alpha}{m}\vx)}{\partial x_j} + x_i\cdot \frac{1}{m}\sum_{\alpha=1}^m\frac{\partial^2 f(\frac{\alpha}{m}\vx)}{\partial x_i\partial x_j}\times\frac{\alpha}{m}
    \end{equation}
    Denote that $H^{(\alpha)}_{ij} = \frac{\partial^2 f(\frac{\alpha}{m}\vx)}{\partial x_i\partial x_j}$, \emph{i.e.}, $H^{(\alpha)}$ is the Hessian matrix of $f(\frac{\alpha}{m}\vx)$. Thus
    \begin{equation}
        \frac{\partial \text{IG}(\vx)_i}{\partial x_j} = \begin{cases}
            \frac{1}{m}\sum_{\alpha=1}^m \nabla f(\frac{\alpha}{m}\vx) + x_i\cdot \frac{\alpha}{m^2}H^{(\alpha)}_{ij},\quad i=j\\
            x_i\cdot\sum_{\alpha=1}^m\frac{\alpha}{m^2}H^{(\alpha)}_{ij},\quad i\neq j
        \end{cases}
    \end{equation}
    In matrix form, 
    \begin{equation}
        \nabla \text{IG} = \text{diag}\left(\frac{1}{m}\sum_{\alpha=1}^m\nabla f(\frac{\alpha}{m}\vx)\right) + [\vx, \cdots, \vx] \otimes \frac{\alpha}{m^2}\sum_{\alpha=1}^m H^{(\alpha)}\label{apxeqn:grad_IG}
    \end{equation}
    If we use softplus as an activation function, \emph{i.e.}, $g(\vx) = \frac{1}{\beta}\log(1+\exp(\beta\vx))$, then,
    \begin{equation}
        g''(\vx) = \frac{\beta e^{\beta\vx}}{(e^{\beta\vx} + 1)^2}
    \end{equation}
    and 
    \begin{equation}
        \lim_{\beta\rightarrow\infty} g''(\vx) = 0
    \end{equation}
    As $\beta\rightarrow\infty$, $H^{(\alpha)}$ will tend to $0$, and the second term in Eq.~\ref{apxeqn:grad_IG} will tend to 0. At the same time, if we choose the number of steps in IG, $m$ larger, $\frac{\alpha}{m^2}$ will converge to $0$ faster than $\frac{1}{m}$. Therefore, $\nabla\text{IG}$ will be diagonal-dominated.

\subsection{Additional visualization of attribution gradients}\label{append:vis_attgrad}
We provide the first 100-dimensions heatmaps of absolute values of attribution gradients, \emph{i.e.}, gradients of IG, on MNIST and Fashion-MNIST in addition to CIFAR-10 presented in Fig.~\ref{fig:hess_diag}. Moreover, the complete heatmaps for all the three datasets are also presented. As observed in \cref{apxfig:grad_100,apxfig:grad_mnist,apxfig:grad_fmnist,apxfig:grad_cifar}, the matrices of attribution gradients are diagonal-dominant.

\begin{figure}
  \centering
  \begin{subfigure}{.45\textwidth}
    \centering
    \includegraphics[width=\textwidth]{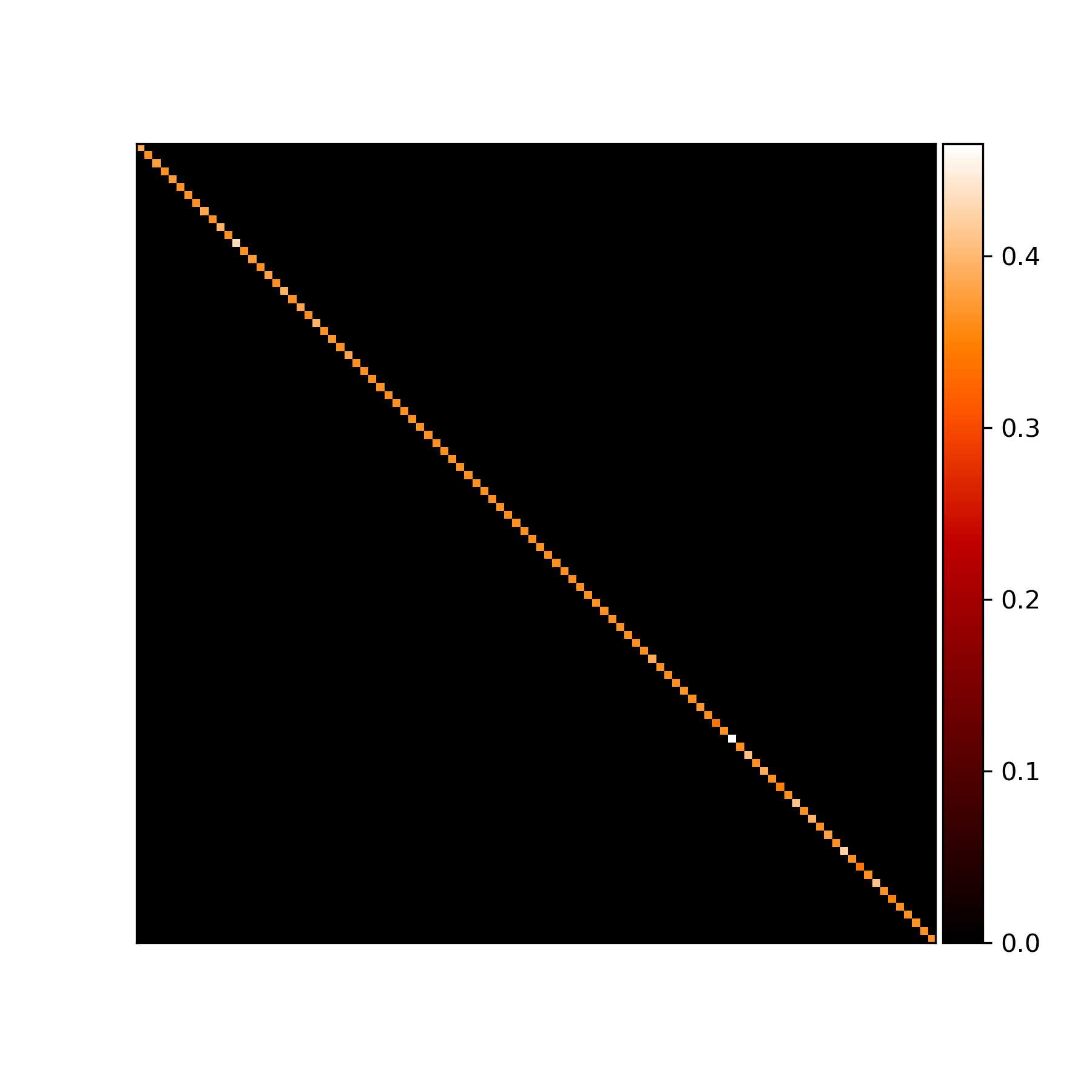}
    \caption{}
  \end{subfigure}
  \begin{subfigure}{.45\textwidth}
    \centering
    \includegraphics[width=\textwidth]{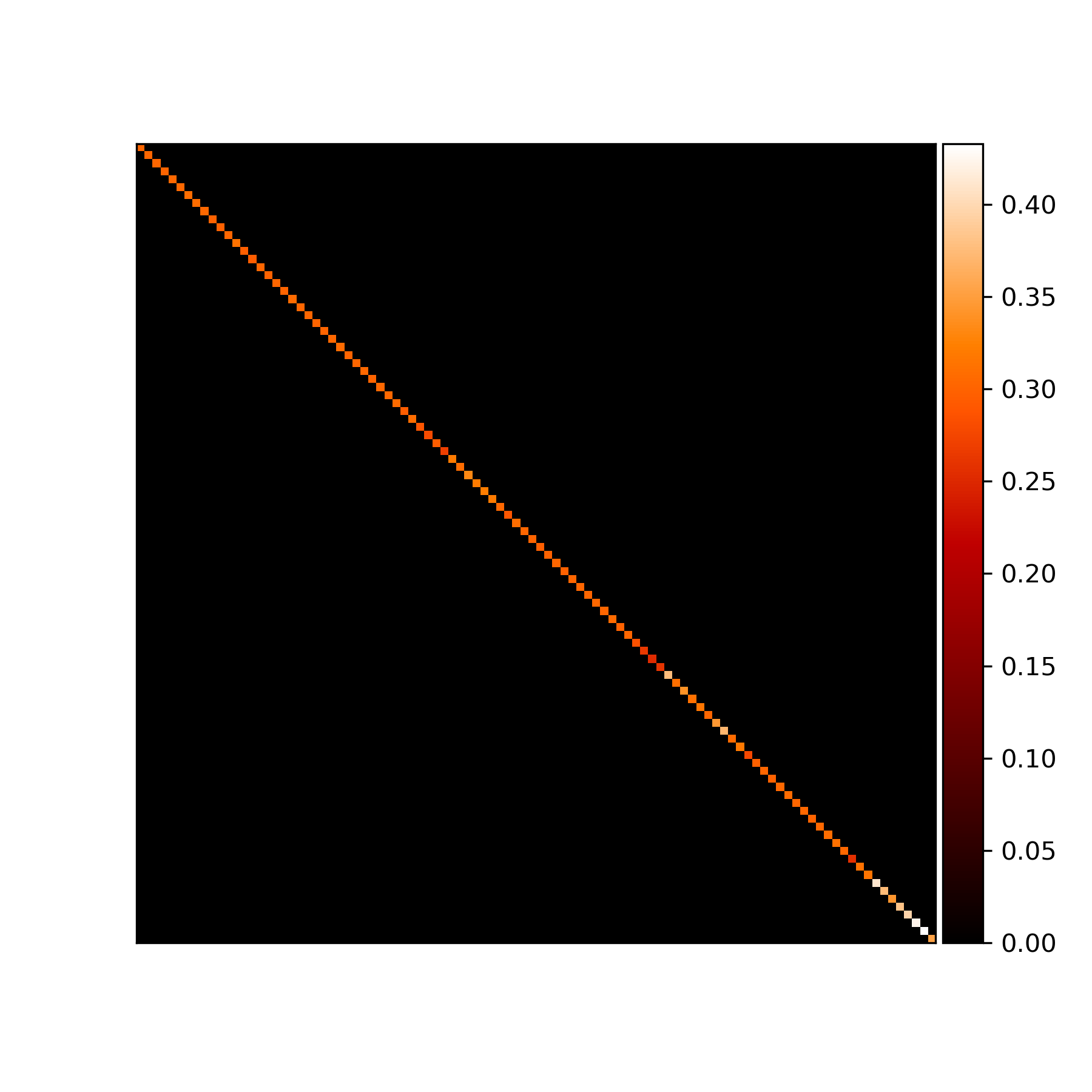}
    \caption{}
  \end{subfigure}
  \caption{The first 100 dimensions of gradient attribution generated from (a) MNIST and (b) Fashion-MNIST.}\label{apxfig:grad_100}
\end{figure}

\begin{figure}  
  \centering
  \includegraphics[width=.8\textwidth]{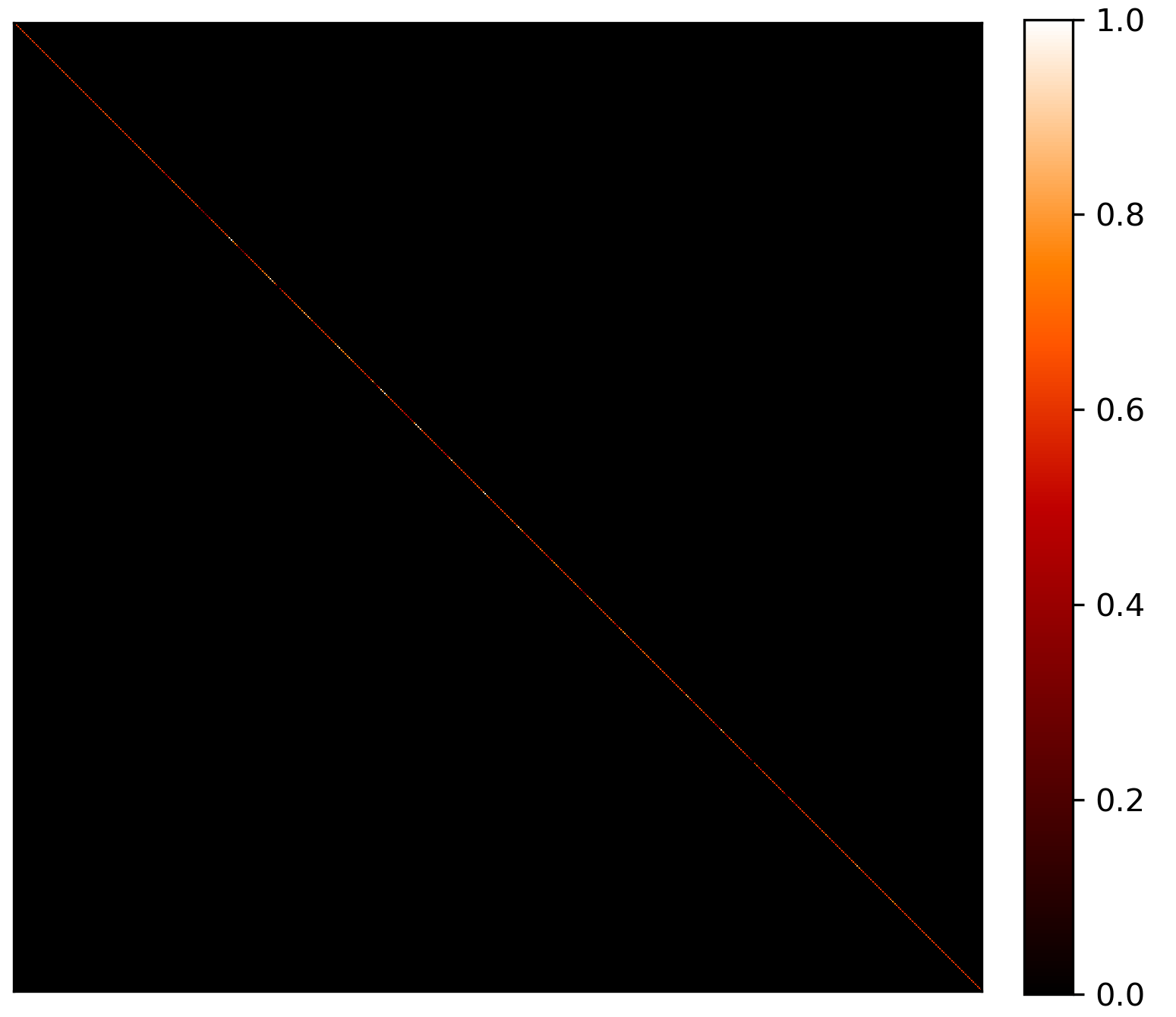}
  \caption{The full heatmap of attribution gradients of MNIST in size $784\times 784$.}\label{apxfig:grad_mnist}
\end{figure}
\begin{figure}
  \centering
  \includegraphics[width=.79\textwidth]{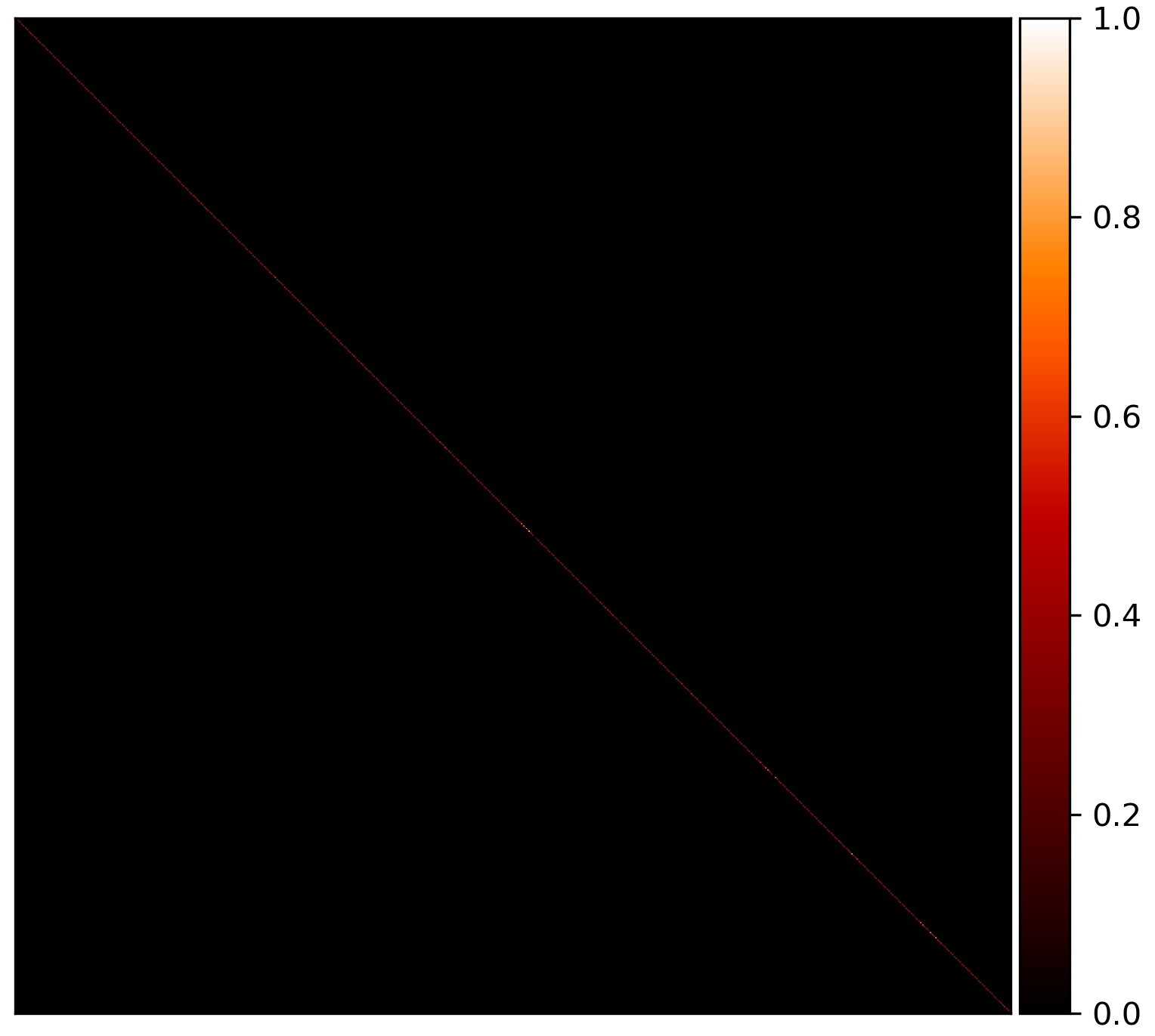}
  \caption{The full heatmap of attribution gradients of Fashion-MNIST in size $784\times 784$.}\label{apxfig:grad_fmnist}
\end{figure}
\begin{figure}  
  \centering
  \includegraphics[width=.8\textwidth]{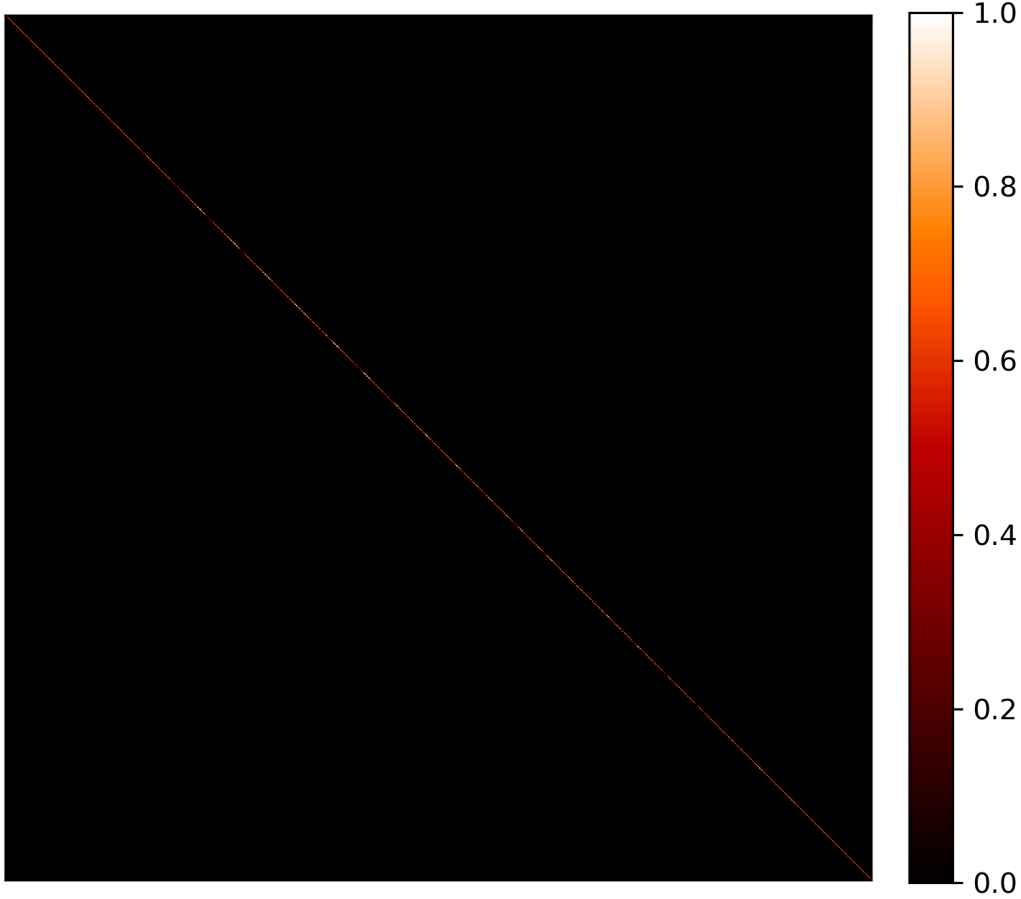}
  \caption{The full heatmap of attribution gradients of CIFAR-100 in size $3072\times 3072$.}\label{apxfig:grad_cifar}
\end{figure}

\section{Additional experimental results}\label{append:additional_exp}
\subsection{Additional results of upper bound on more models without the label constraint}\label{append:exp_unlabel}
In this subsection, we evaluate the proposed upper bound without the label constraint for the other models, apart from TRADES+IGR in the paper. The perturbation size is chosen to be $0.1$ for all evaluations. As in Sec.~\ref{sec:exp}, we use Theorem~\ref{thm:l2_unlabel} and \ref{thm:linf_unlabel} to compute $T_e=\xi_{max}\varepsilon$ and extend it to $T_c$ using Eq.~\ref{eqn:ub_cos}. The modified upper bound $T'_e=c\xi_{max}\varepsilon$ is also provided to address the inaccurate Taylor approximation (less than $1\%$). $\widehat{T}_e$ and $\widehat{T}_c$ are computed from the corresponding average attribution differences. The results are given in Table~\ref{apxtbl:eval_unlabel}. It is shown that the sample distances under both Euclidean and cosine metrics are bounded by $T'_e$ and $T_c$ as expected. All the distortion caused by the attacks \emph{i.e.}, $\widehat{T}_e$ and $\widehat{T}_c$ are smaller than $T'_e$ and $T_c$.
\begin{table}
  \centering
  \caption{Evaluation of upper bounds without the label constraint. The cosine distance values $\widehat{T}_c$ and $\widehat{T}'_c$ are converted to degrees for easier comparison.}\label{apxtbl:eval_unlabel}
  \resizebox{\textwidth}{!}{%
  \begin{tabular}{lccccc|ccccc|ccccc}
    \toprule
     & \multicolumn{5}{c}{SM} & \multicolumn{5}{c}{Input*gradient} & \multicolumn{5}{c}{IG}\\
     \cmidrule(r){2-6}\cmidrule(r){7-11}\cmidrule(r){12-16}
     $\ell_2$ &$\widehat{T}_e$ &$T_e$&$T'_e$&$\widehat{T}_c$ &$T_c$&$\widehat{T}_e$ &$T_e$&$T'_e$&$\widehat{T}_c$ &$T_c$&$\widehat{T}_e$ &$T_e$&$T'_e$&$\widehat{T}_c$ &$T_c$\\
    \midrule
    AT        & 0.44 & 0.94 & 0.98 & 9.19 & 14.87 & 0.07 & 0.63 & 0.63 & 1.17 & 4.34 & 0.04 & 0.25 & 0.25 & 2.73 & 4.77 \\
    IG-NORM   & 0.03 & 0.70 & 0.79 & 4.33 & 9.06 & 0.03 & 0.50 & 0.52 & 1.40 & 4.75 & 0.01 & 0.16 & 0.16 & 1.65 & 4.37 \\
    AdvAAT    & 0.30 & 1.83 & 1.83 & 11.24 & 20.44 & 0.08 & 0.66 & 0.67 & 1.84 & 3.79 & 0.04 & 0.24 & 0.24 & 0.28 & 3.82 \\
    ART       & 0.18 & 0.79 & 0.81 & 10.88 & 14.21 & 0.09 & 0.92 & 0.97 & 0.83 & 6.06 & 0.07 & 0.23 & 0.23 & 0.59 & 4.21 \\
    TRADES    & 0.11 & 0.76 & 0.76 & 10.01 & 18.40 & 0.05 & 0.48 & 0.48 & 1.19 & 3.20 & 0.03 & 0.17 & 0.17 & 1.91 & 3.87 \\
    \midrule
    \midrule
    $\ell_\infty$\\
    \midrule
    AT        & 0.55 & 1.27 & - & 23.47 & 30.18 & 0.63 & 0.73 & - & 9.28 & 61.03 & 0.41 & 0.76 & - & 26.62 & 45.32\\
    IG-NORM   & 0.42 & 0.70 & - & 25.16 & 32.60 & 0.21 & 0.70 & - & 6.88 & 42.94 & 0.20 & 0.48 & - & 21.63 & 35.30\\
    AdvAAT    & 0.64 & 1.83 & - & 25.20 & 31.25 & 0.07 & 0.74 & - & 7.79 & 45.16 & 0.23 & 0.52 & - & 28.73 & 39.40\\
    ART       & 0.49 & 1.01 & - & 23.81 & 35.17 & 0.27 & 0.79 & - & 10.21 & 48.30 & 0.31 & 0.67 & - & 31.01 & 35.64\\
    TRADES    & 0.39 & 0.75 & - & 22.40 & 29.10 & 0.33 & 0.69 & - & 9.17 & 52.63 & 0.23 & 0.50 & - & 22.98 & 36.38\\
    \bottomrule
  \end{tabular}
  }
\end{table}

\subsection{Ablation study of upper bound using different \texorpdfstring{$\varepsilon$}{epsilon}}
In this subsection, we provide more experimental results of the proposed bound on MNIST, Fashion-MNIST and CIFAR-10 in both $\ell_2$ and $\ell_\infty$ cases under label constraint. More specifically, for MNIST and Fashion-MNIST, we additionally provide results of $\varepsilon=0.1$ and $\varepsilon=0.2$ in $\ell_2$ case, and $\varepsilon=0.01$ and $\varepsilon=0.03$ in $\ell_\infty$ case. For CIFAR-10, we provide $\varepsilon=0.2$ and $\varepsilon=0.3$ for $\ell_2$ case, and $\varepsilon=4/255$ and $\varepsilon=8/255$ in $\ell_\infty$ case. The results are presented in \cref{apxtbl:l2_label,apxtbl:linf_label}. For $\ell_2$ constrained case, we also provide the modified upper bound $T'_e$ as in Sec.~\ref*{sec:exp} since the Taylor approximations are inaccurate occasionally ($0\sim 6\%$). For all tested $\varepsilon$, it is noticed that the theoretical bounds bound the sample Euclidean and cosine distance above. In some cases, the means of $T_e$ and $T'_e$ are the same because $T_e$ bound $\widehat{T}_e$ well and the $c$ in Eq.~(\ref{eqn:modified_k}) equals to 1 for $T'_e$. As in Sec.~\ref{sec:exp}, for $\ell_\infty$ case, we do not present the results of $T'_e$, because $T_e$ has bounded all $\widehat{T}_e$ above.

\begin{table}
  \centering
  \caption{Evaluation of $\ell_2$-norm upper bound with the label constraint on MNIST, Fashion-MNIST and CIFAR-10 using different $\varepsilon$.}\label{apxtbl:l2_label}
  \resizebox{.8\textwidth}{!}{%
  \begin{tabular}{lccccc|ccccc}
    \toprule
     &$\widehat{T}_e$ &$T_e$ & $T'_e$ &$\widehat{T}_c$($\deg$) &$T_c$($\deg$)&$\widehat{T}_e$ &$T_e$ & $T'_e$ &$\widehat{T}_c$($\deg$) &$T_c$($\deg$)\\
    \cmidrule(r){2-6}\cmidrule(r){7-11}%
    MNIST & \multicolumn{5}{c}{$\varepsilon=0.1$} & \multicolumn{5}{c}{$\varepsilon=0.2$}\\
     \midrule
    AT         & 0.0856 & 0.3074 & 0.3101 & 4.6026 & 14.3020 & 0.1176 & 0.4611 & 0.4617 & 5.9845 & 29.6082  \\
    IG-NORM    & 0.1436 & 0.5776 & 0.5776 & 3.9514 & 14.6430 & 0.2094 & 0.8664 & 0.8679 & 5.4824 & 30.3707  \\
    AdvAAT     & 0.0938 & 0.7182 & 0.7193 & 2.1315 & 13.8325 & 0.1346 & 1.0773 & 1.1013 & 2.8725 & 28.5660  \\
    ART        & 0.2031 & 0.6538 & 0.6542 & 6.4244 & 13.9011 & 0.2302 & 0.9807 & 0.9993 & 8.5982 & 28.7175  \\
    TRADES     & 0.2159 & 1.0120 & 1.0812 & 3.4791 & 14.1049 & 0.3281 & 1.5180 & 1.5211 & 4.9429 & 29.1695  \\
    TRADES+IGR & 0.2171 & 0.9928 & 1.0101 & 3.4171 & 14.0621 & 0.3032 & 1.4892 & 1.4892 & 4.5166 & 29.0745  \\
    \midrule
    \midrule
    Fashion-MNIST& \multicolumn{5}{c}{$\varepsilon=0.1$} & \multicolumn{5}{c}{$\varepsilon=0.2$}\\
    \midrule
   AT         & 0.1080 & 0.1400 & 0.1401 & 16.7770 & 26.6451 & 0.1413 & 0.2100 & 0.2119 & 21.3901 & 63.7570  \\
   IG-NORM    & 0.1232 & 0.3578 & 0.3578 & 8.9312 & 17.8256 & 0.1771 & 0.5367 & 0.5371 & 12.5177 & 37.7516  \\
   AdvAAT     & 0.1500 & 0.3470 & 0.3533 & 7.3499 & 19.0014 & 0.1984 & 0.5205 & 0.5209 & 9.4643 & 40.6308  \\
   ART        & 0.2057 & 0.2774 & 0.2775 & 11.6920 & 19.9515 & 0.2343 & 0.4161 & 0.4161 & 13.4216 & 43.0352  \\
   TRADES     & 0.0797 & 0.1926 & 0.1987 & 10.5544 & 24.7845 & 0.1050 & 0.2889 & 0.2889 & 13.8358 & 56.9729  \\
   TRADES+IGR & 0.0672 & 0.0906 & 0.0906 & 11.3338 & 17.9020 & 0.0879 & 0.1359 & 0.1510 & 14.7998 & 37.9358  \\
   \midrule
   \midrule
   CIFAR-10& \multicolumn{5}{c}{$\varepsilon=0.2$} & \multicolumn{5}{c}{$\varepsilon=0.3$}\\
     \midrule
    AT         & 0.0607 & 0.5064 & 0.5064 & 3.7975 & 9.5783 & 0.0858 & 1.2661 & 1.2661 & 5.2981 & 24.5816  \\
    IG-NORM    & 0.0123 & 0.3164 & 0.3164 & 1.4311 & 8.7679 & 0.0592 & 0.7910 & 0.7910 & 6.9460 & 22.4006  \\
    AdvAAT     & 0.0300 & 0.4772 & 0.4775 & 1.7094 & 7.6575 & 0.0548 & 1.1933 & 1.1933 & 3.0553 & 19.4588  \\
    ART        & 0.0501 & 0.4556 & 0.4699 & 3.1004 & 8.4476 & 0.0718 & 1.1391 & 1.1420 & 6.3493 & 21.5468  \\
    TRADES     & 0.0360 & 0.3468 & 0.3468 & 3.9435 & 7.7550 & 0.0528 & 0.8671 & 0.8780 & 5.7514 & 19.7151  \\
    TRADES+IGR & 0.0395 & 0.3384 & 0.3385 & 4.1222 & 7.6942 & 0.0577 & 0.8460 & 0.8460 & 5.9201 & 19.5551  \\
    \bottomrule
  \end{tabular}
  }
\end{table}

\begin{table}
    \centering
    \caption{Evaluation of upper bounds under $\ell_\infty$-norm constraint and label constraint on MNIST, Fashion-MNIST and CIFAR-10 with different $\varepsilon$.}\label{apxtbl:linf_label}
    \resizebox{.8\textwidth}{!}{%
    \begin{tabular}{lcccc|cccc}
      \toprule
      &$\widehat{T}_e$ &$T_e$ &$\widehat{T}_c$($\deg$) &$T_c$($\deg$)&$\widehat{T}_e$ &$T_e$ &$\widehat{T}_c$($\deg$) &$T_c$($\deg$)\\
      \cmidrule(r){2-5}\cmidrule(r){6-9}%
      MNIST & \multicolumn{4}{c}{$\varepsilon=0.01$} & \multicolumn{4}{c}{$\varepsilon=0.03$}\\
      \midrule
      AT          & 0.0556 & 0.1550 & 2.9408 & 7.1839 & 0.0888 & 0.4651 & 4.2516 & 22.0345\\
      IG-NORM     & 0.1005 & 0.2409 & 2.8745 & 6.0632 & 0.1710 & 0.7228 & 4.4179 & 18.4742\\
      AdvAAT      & 0.0608 & 0.4398 & 1.4264 & 5.0839 & 0.1280 & 1.3195 & 2.4883 & 15.4170\\
      ART         & 0.0767 & 0.5644 & 2.8025 & 10.3833 & 0.3617 & 1.6931 & 9.3505 & 32.7312\\
      TRADES      & 0.1634 & 0.4443 & 2.7539 & 6.3323 & 0.3193 & 1.3330 & 4.7523 & 19.3224\\
      TRADES+IGR  & 0.1744 & 0.4077 & 2.7731 & 5.1333 & 0.2932 & 1.2232 & 4.2425 & 15.5702\\
      \midrule
      \midrule
      Fashion-MNIST & \multicolumn{4}{c}{$\varepsilon=0.01$} & \multicolumn{4}{c}{$\varepsilon=0.03$}\\
       \midrule
      AT         & 0.0516 & 0.0560 & 6.5146 & 9.4467 & 0.1043 & 0.1680 & 16.4165 & 29.4979  \\
      IG-NORM    & 0.0611 & 0.1113 & 4.7737 & 8.3315 & 0.1137 & 0.3339 & 8.1315 & 25.7661  \\
      AdvAAT     & 0.0987 & 0.1841 & 5.3706 & 8.1184 & 0.1616 & 0.5523 & 7.9204 & 25.0658  \\
      ART        & 0.0660 & 0.1443 & 6.6582 & 9.2791 & 0.3946 & 0.4329 & 23.0589 & 28.9294  \\
      TRADES     & 0.0509 & 0.0907 & 7.0612 & 9.0233 & 0.0804 & 0.2721 & 10.8579 & 28.0672  \\
      TRADES+IGR & 0.0363 & 0.0505 & 7.1214 & 8.0541 & 0.0716 & 0.1515 & 12.1090 & 24.8550  \\
      \midrule
      \midrule
      CIFAR-10  & \multicolumn{4}{c}{$\varepsilon=4/255$} & \multicolumn{4}{c}{$\varepsilon=8/255$}\\
        \midrule
      AT         & 0.0894 & 0.1200 & 6.0843 & 6.4041 & 0.1549 & 0.2400 & 10.5129 & 12.8901  \\
      IG-NORM    & 0.0388 & 0.0750 & 4.5743 & 5.2004 & 0.0700 & 0.1501 & 8.1882 & 10.4443  \\
      AdvAAT     & 0.0776 & 0.0817 & 2.2657 & 5.7139 & 0.0959 & 0.1635 & 3.8595 & 11.4857  \\
      ART        & 0.0722 & 0.1056 & 4.3010 & 5.2445 & 0.1281 & 0.2113 & 8.4555 & 10.5337  \\
      TRADES     & 0.0539 & 0.0784 & 3.6093 & 5.3381 & 0.0909 & 0.1569 & 9.3571 & 10.7232  \\
      TRADES+IGR & 0.0589 & 0.0821 & 3.8230 & 5.1622 & 0.0978 & 0.1643 & 9.5879 & 10.3668  \\
        \bottomrule
    \end{tabular}
    }
  \end{table}

  \begin{figure}
    \centering
    \begin{subfigure}{.32\textwidth}
      \centering
      \includegraphics[width=\textwidth]{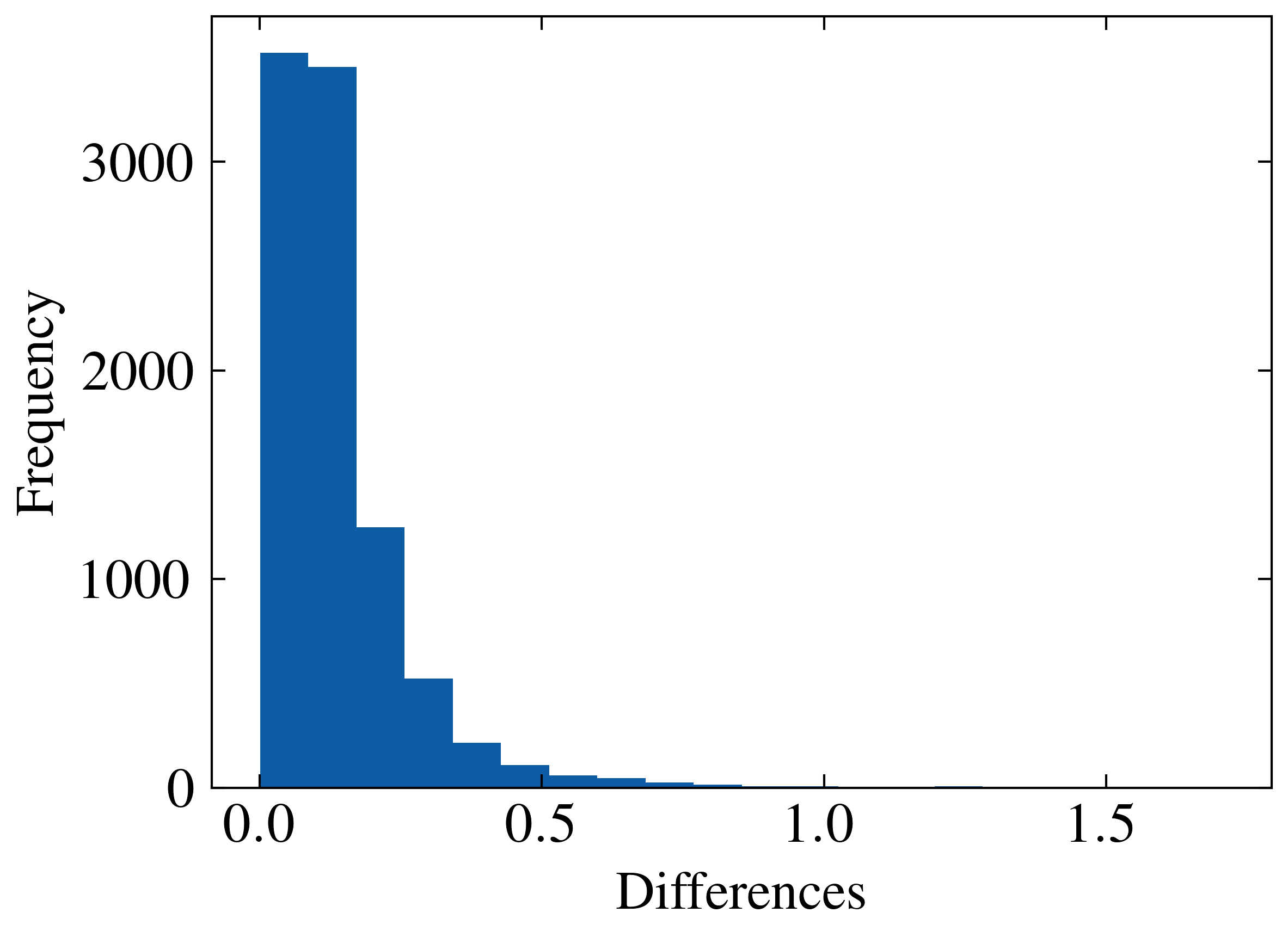}
      \caption{AT}
    \end{subfigure}
    \begin{subfigure}{.32\textwidth}
      \centering
      \includegraphics[width=\textwidth]{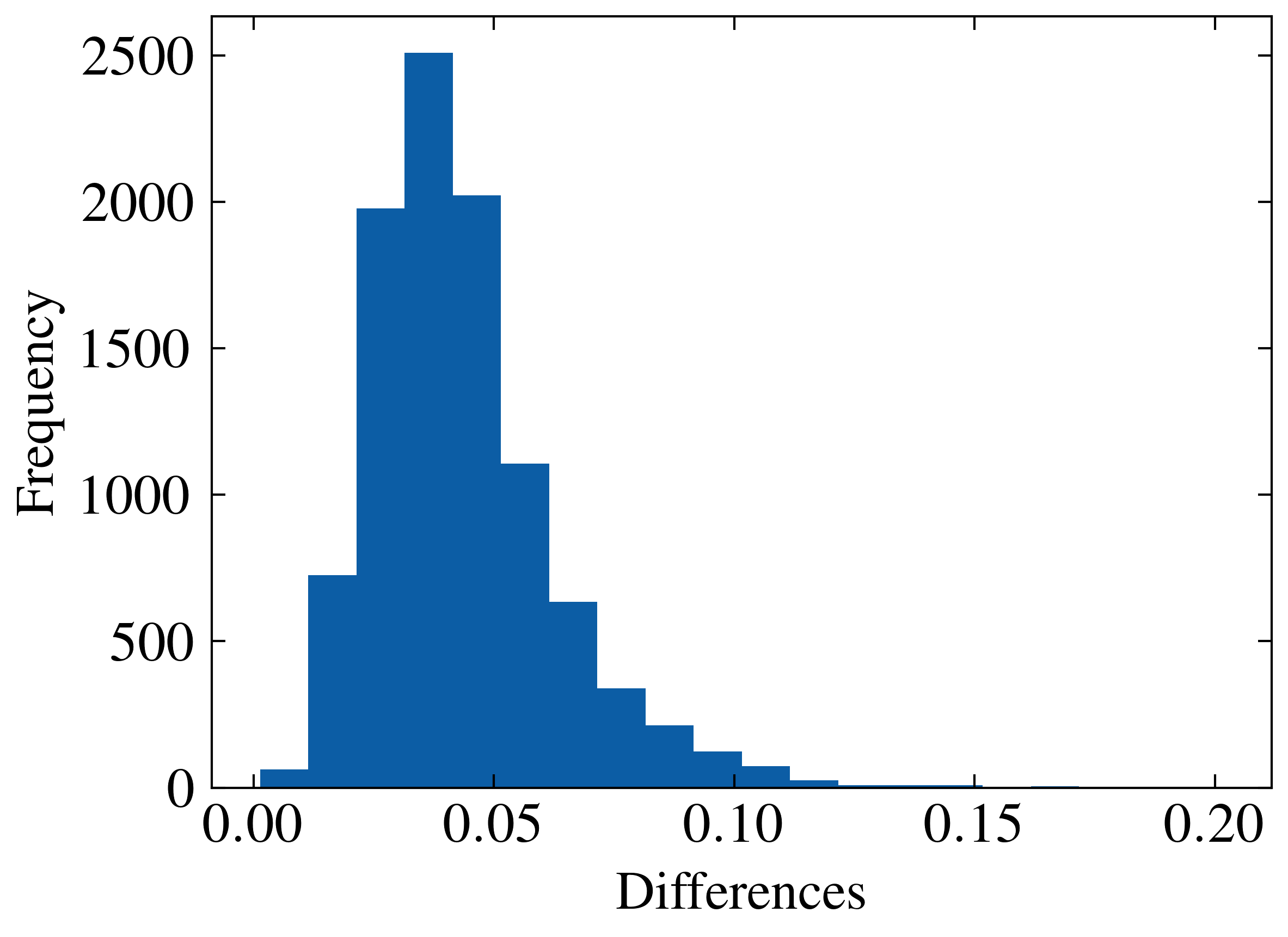}
      \caption{IG-NORM}
    \end{subfigure}
    \begin{subfigure}{.32\textwidth}
      \centering
      \includegraphics[width=\textwidth]{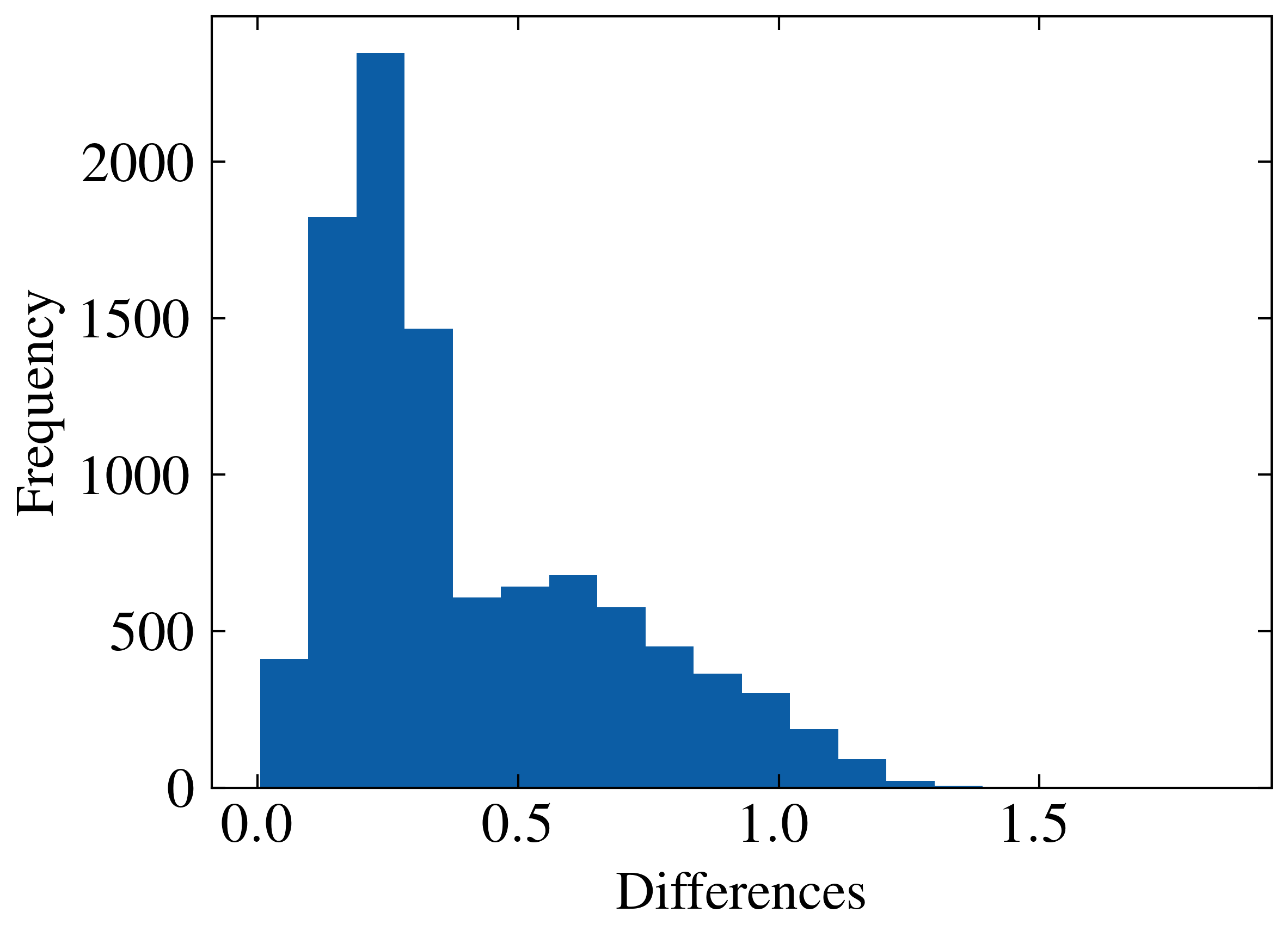}
      \caption{AdvAAT}
    \end{subfigure}
    \begin{subfigure}{.32\textwidth}
      \centering
      \includegraphics[width=\textwidth]{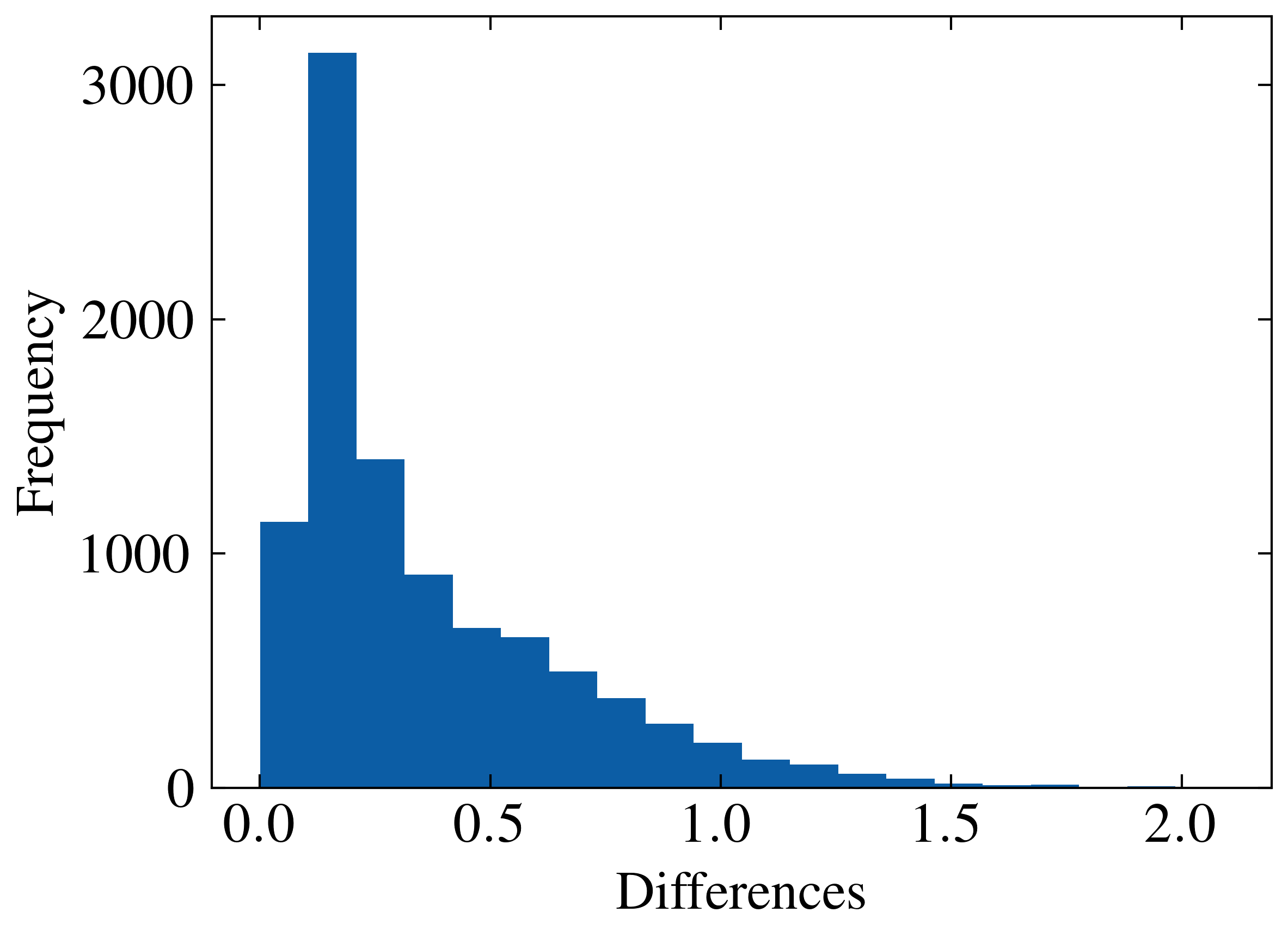}
      \caption{ART}
    \end{subfigure}
    \begin{subfigure}{.32\textwidth}
      \centering
      \includegraphics[width=\textwidth]{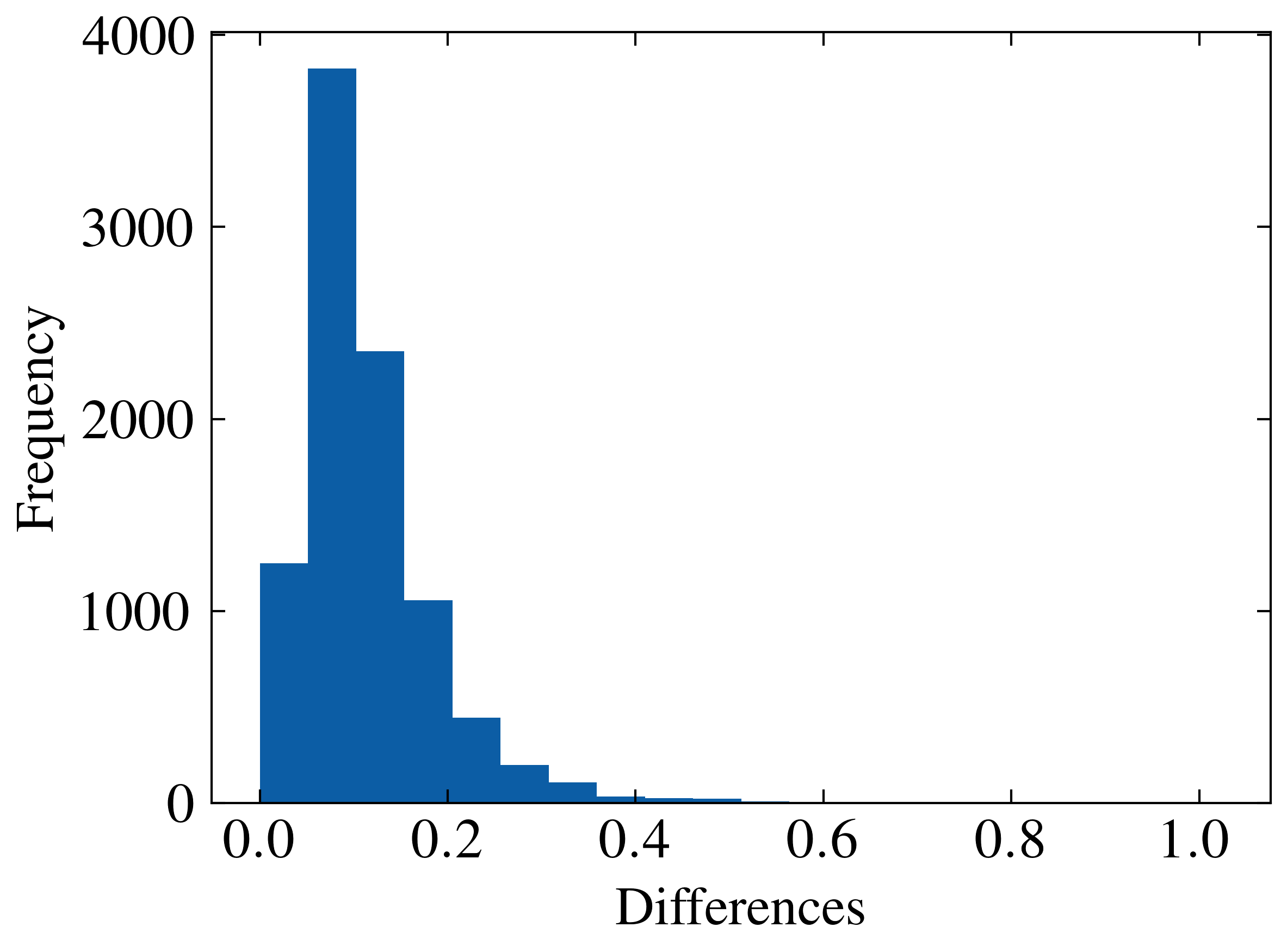}
      \caption{TRADES}
    \end{subfigure}
    \begin{subfigure}{.32\textwidth}
      \centering
      \includegraphics[width=\textwidth]{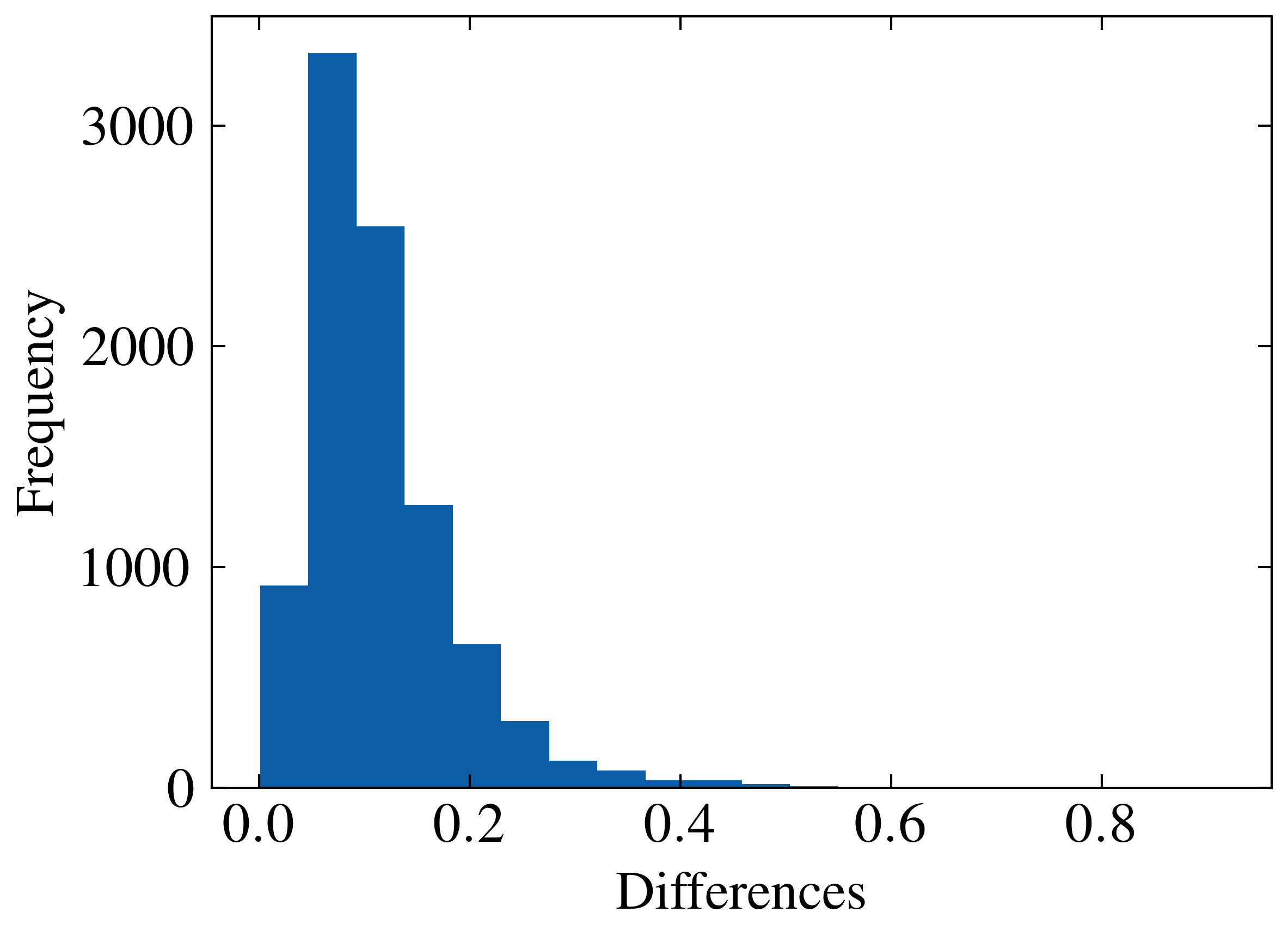}
      \caption{TRADES+IGR}
    \end{subfigure}
    \caption{Distributions of differences between computed bounds and attribution differences from CIFAR-10.}\label{apxfig:dist}
  \end{figure}

  \begin{table}
    \centering
    \caption{Evaluation of upper bounds with the label constraint on Flower dataset. The numbers in the brackets indicate the percentages that attacked attribution is outside the $T_e$.}\label{apxtbl:eval_label_linf_flower}
    \resizebox{.9\textwidth}{!}{%
    \begin{tabular}{lccccc|cccc}
      \toprule
      & \multicolumn{5}{c}{$\ell_2$} & \multicolumn{4}{c}{$\ell_\infty$}\\
       \cmidrule(r){2-6}\cmidrule(r){7-10}%
       &$\widehat{T}_e$ &$T_e$ &$T'_e$ &$\widehat{T}_c$($\deg$) &$T_c$($\deg$)&$\widehat{T}_e$ &$T_e$ &$\widehat{T}_c$($\deg$) &$T_c$($\deg$)\\
       \midrule
      AT         & 0.0170 & 0.0341 [2.17\%] & 0.0447 & 1.3165 & 1.9806 & 0.0238 & 0.4100 & 2.1937 & 13.4811 \\
      AdvAAT     & 0.0295 & 0.1424 [0.00\%] & 0.1424 & 1.5568 & 2.2835 & 0.0472 & 0.1025 & 1.4130 & 11.8732 \\
      TRADES     & 0.0220 & 0.0534 [0.72\%] & 0.0592 & 1.3383 & 3.1567 & 0.0182 & 0.1081 & 3.3887 & 11.9829 \\
      TRADES+IGR & 0.0080 & 0.0219 [0.72\%] & 0.0262 & 0.8870 & 2.1255 & 0.0242 & 0.2873 & 1.5930 & 12.5584 \\
      \bottomrule
    \end{tabular}
    }
  \end{table}

  \begin{table}
    \centering
    \caption{Evaluation of upper bounds with the label constraint on ImageNet. The numbers in the brackets indicate the percentages that attacked attribution is outside the $T_e$.}\label{apxtbl:eval_label_linf_imagenet}
    \begin{tabular}{lccccc|cccc}
      \toprule
      & \multicolumn{5}{c}{$\ell_2$}&\multicolumn{4}{c}{$\ell_\infty$}\\
       \cmidrule(r){2-6}\cmidrule(r){7-10}%
       &$\widehat{T}_e$ &$T_e$ &$T'_e$ &$\widehat{T}_c$($\deg$) &$T_c$($\deg$)&$\widehat{T}_e$ &$T_e$ &$\widehat{T}_c$($\deg$) &$T_c$($\deg$)\\
       \midrule
       CE &0.1049 &0.1923[3.06\%] &0.2365 &7.7959 &8.0933 &0.3148 &0.7399 &3.8227 &10.9339\\
       AT &0.1077 &0.1588[3.32\%] &0.5221 &3.4773 &5.1797 &0.1974 &0.2226 &0.3455 &8.7333\\
      \bottomrule
    \end{tabular}
  \end{table}

\subsection{Evaluation of upper bounds under \texorpdfstring{$\ell_2$}{L2}-norm and \texorpdfstring{$\ell_\infty$}{Linf}-norm constraints on larger size images.}\label{append:eval_larger}
The proposed method is also scalable to larger size images. In this subsection, we provide experimental results on Flower \cite{nilsback2006visual}, which contains images of size of $128\times 128\times 3$, and a subset of ImageNet \cite{deng2009imagenet} containing 5,000 randomly chosen images with size of $224\times 224\times 3$. We choose $\varepsilon=0.1$ for $\ell_2$ and $\varepsilon=8/255$ for $\ell_\infty$ cases to compute the theoretical upper bounds $T_e$ and $T_c$, as well as the modified bound $T'_e$, as introduced in Sec.~\ref{sec:exp}. The sample distance $\widehat{T}_e$ and $\widehat{T}_c$ are computed from the mean of distances between perturbed and original attributions, where PGD-20 is used as $\ell_2$ attack and 200-step IFIA is used as $\ell_\infty$ attack. In paricular, since the baseline attribution robustness methods do not scale up to ImageNet, we only provide results using standard training and adversarial training to illustrate the scalability of our method. The results are presented in \cref{apxtbl:eval_label_linf_flower,apxtbl:eval_label_linf_imagenet}.

We notice that the theoretical bounds are all valid for larger size images, where all angular and modified Euclidean bound effective bound the maximum discrepancy of perturbed attributions. It worths noting that the computation costs of the values for the upper bound in $\ell_2$-norm constrained case become heavier for high-dimensional images due to the computation of eigenvalues for large matrices. For $\ell_\infty$-norm case, these eigenvalue computations have been avoided. We will study the scalability of our methods under $\ell_2$-norm constraint in future work.

\section{Alternative formulation of upper bound to the worst-case attribution deviations}\label{append:alt_form}
The formulation of Eq.~\ref{eq:opt_cert_ar} can be rewritten in an equivalent form to find the maximum $\varepsilon$ subject to the attribution difference under certain threshold $\omega$. Formally, the formulation can be written as
\begin{equation}
  \begin{aligned}
    \max \quad&\varepsilon\\
    \text{s.t.}\quad &D(g^y(\vx), g^y(\vx+\vdelta))\leq \omega\\
    & \Vert\vdelta\Vert_p\leq\varepsilon\\
    & \argmax_k f_k(\vx) = \argmax_k f_k(\vx+\vdelta)
  \end{aligned}
\end{equation}

Under the above formulation, we can use the theoretical bound derived using Eq.~\ref{eq:opt_cert_ar} to find the corresponding optimal $\varepsilon$. For the $\ell_2$-norm case with or without the label constraint, when $D(\cdot, \cdot)$ is the $\ell_2$ distance, the maximum $\varepsilon$ can be computed using the upper bound $\xi_{max}\varepsilon$ derived in Theorem~\ref{thm:l2_unlabel}, 
\begin{align}
    \max_{\vdelta} \Vert g^y(\vx+\vdelta) - g^y(\vx)\Vert_2 = \xi_{max}\varepsilon\leq \omega\\
    \Rightarrow \varepsilon \leq \frac{\omega}{\xi_{max}}
\end{align}
Similarly, the maximum $\varepsilon$ when $D(\cdot, \cdot)$ is cosine distance can be derived using Corollary~\ref{cor:cert_cos_sim} as
\begin{align}
  \max_{\vdelta} D_c(g^y(\vx+\vdelta), g^y(\vx)) = 1 - \sqrt{1-\frac{\xi_{max}\varepsilon}{\Vert g^y(\vx)\Vert_2^2}}\leq \omega\\
  \Rightarrow\varepsilon \leq \frac{\Vert g(\vx)\Vert_2^2}{\xi_{max}}\left(1-(1-\omega)^2\right)
\end{align}

The maximum $\varepsilon$ for the $\ell_\infty$ constraint case with and without the label constraint can be also derived in the same way using the relaxed upper bound in Theorem~\ref{thm:linf_unlabel}. Since the Kendall's rank correlation is discontinuous, researchers proposed to use cosine similarity and $\ell_p$ distance to measure the similarity/dissimilarity between attributions from attacked samples and original samples \cite{wang2022exploiting,chen2019robust,boopathy2020proper}. Thus, in this work, we derive the bounds for cosine similarity and Euclidean distance.
\end{document}